\documentclass{article}
\usepackage[utf8]{inputenc}
\usepackage{amsmath, amsfonts,amsthm}
\usepackage[left=1in,top=1in,right=1in,bottom=1in,letterpaper]{geometry}
\usepackage{multirow,booktabs,makecell}
\usepackage{graphicx,epstopdf,subcaption}
\usepackage{tikz}
\usetikzlibrary{positioning}
\usepackage{color,hyperref}
\usepackage[numbers]{natbib}
\usepackage{comment}
\usepackage{url}
\usepackage{enumitem}

\usepackage{authblk}

\usepackage[ruled,linesnumbered]{algorithm2e}



\newcommand{\bbR}{\mathbb{R}}

\newcommand{\calP}{\mathcal{P}}


\newcommand{\calF}{\mathcal{F}}
\newcommand{\calG}{\mathcal{G}}
\newcommand{\calJ}{\mathcal{J}}


\DeclareMathOperator*{\argmin}{argmin}

\DeclareMathOperator*{\Argmin}{Argmin}

\newcommand{\dd}{\mathrm{d}}

\usepackage{wrapfig}
\usepackage{longtable}
\usepackage{booktabs}
\usepackage{array}
\newtheorem{theorem}{Theorem}[section]
\newtheorem{lemma}[theorem]{Lemma}

\newtheorem{proposition}[theorem]{Proposition}

\theoremstyle{definition}
\newtheorem{definition}[theorem]{Definition}
\newtheorem{example}[theorem]{Example}

\theoremstyle{remark}

\title{High-dimensional Mean-Field Games \\by Particle-based Flow Matching}

\author[1]{Jiajia Yu}
\author[2]{Junghwan Lee}
\author[2]{Yao Xie}
\author[1]{Xiuyuan Cheng}

\affil[1]{{\small Department of Mathematics, Duke University}}
\affil[2]{{\small H. Milton Stewart School of Industrial and Systems Engineering, Georgia Institute of Technology.}}

\date{}

\begin{document}

\maketitle

\begin{abstract}
Mean-field games (MFGs) study the Nash equilibrium of systems with a continuum of interacting agents, which can be formulated as the fixed-point of optimal control problems. They provide a unified framework for a variety of applications, including optimal transport (OT) and generative models. Despite their broad applicability, solving high-dimensional MFGs remains a significant challenge due to fundamental computational and analytical obstacles. 
In this work, we propose a particle-based deep Flow Matching (FM) method to tackle high-dimensional MFG computation.
In each iteration of our proximal fixed-point scheme, particles are updated using first-order information, and a flow neural network is trained to match the velocity of the sample trajectories in a simulation-free manner. 
Theoretically, in the optimal control setting, we prove that our scheme converges to a stationary point sublinearly, and upgrade to linear (exponential) convergence under additional convexity assumptions.
Our proof uses FM to induce an Eulerian coordinate (density-based) from a Lagrangian one (particle-based), and this also leads to certain equivalence results between the two formulations for MFGs when the Eulerian solution is sufficiently regular.
Our method demonstrates promising performance on non-potential MFGs and high-dimensional OT problems cast as MFGs through a relaxed terminal-cost formulation.
\end{abstract}

\section{Introduction}

Mean-field games (MFGs)~\citep{lasry2007mfg,huang2006mfg} study the Nash Equilibria in games involving a continuum of indistinguishable, non-cooperative players. In an MFG, individual cost is affected by the aggregate behavior of the population $\rho$ due to interaction effects. 
Given the collective behavior of the population, every player seeks an optimal strategy $\hat{v}$ that minimizes their individual cost. However, as players update their strategies to $\hat{v}$, the overall population distribution also evolves to $\hat{\rho}$ induced by $\hat{v}$.
A mean-field Nash equilibrium (MFNE) in this context is a state where the strategy chosen by each player is optimal with respect to the population, and the population itself is consistent with these strategies. 

Mathematically, such an equilibrium can be formulated as the fixed point of an optimal control problem
$\Argmin_{(\tilde{\rho},\tilde{v})\in C_{(\rho,v)}} \calJ(\tilde{\rho},\tilde{v};\rho)$, 
where the cost  $\calJ$ as a functional of the players' strategy $\tilde v$ and individual distribution $\tilde \rho$ (induced by $\tilde v$) 
also involves an external population distribution $\rho$.
The equilibrium is then defined as 
\begin{equation}
\begin{aligned}
    (\hat{\rho},\hat{v}) \in & \Argmin_{(\tilde{\rho},\tilde{v})\in C_{(\rho,v)}} \calJ(\tilde{\rho},\tilde{v};\rho),\quad \rho=\hat{\rho},
\end{aligned}
\label{eq: mfg rho v}
\end{equation} 
that is, when the optimal control solution $\hat \rho$ and the population $\rho$ coincide. We refer to $(\hat{\rho},\hat{v})$ as the best response to $\rho$.
The constraint set $C_{(\rho,v)}$ essentially poses a continuity equation (CE), that is, $\tilde \rho$ satisfies the CE associated with $\tilde v$. The specifics of $C_{(\rho,v)}$ and $\calJ$ will be given in the subsequent sections. 
In this paper, we focus on MFGs with deterministic dynamics, i.e., the agents’ evolution is determined solely by their initial positions and controls, without any stochastic noise. These are also referred to as first-order MFGs, as the PDE defining the constraint set  $C_{(\rho, v)}$  is of first order.

Thanks to the mean-field approximation, MFGs provide a more tractable framework for analyzing systems of a large number of interacting agents. 
Since their introduction, MFGs have found applications across various domains, including economics~\citep{RL_MFG_eco_bookchapter,carmona2020MFG_fin_eco}, social sciences~\citep{MFC_epidemics}, and engineering~\citep{MFG_engineer_survey}.
More recently, MFGs have attracted growing attention due to their emerging connections with topics in machine learning, such as optimal transport~\citep{Benamou2000BBformulation}, normalizing flows~\citep{huang2023bridgingMFGandNF,MFG_generativemodels}, deep neural network training~\citep{ML_dynsystem}, and reinforcement learning~\citep{MFC_MARL,RL_MFG_eco_bookchapter}.
Motivated by applications, there is a growing interest in tackling the computation of MFGs, especially in high-dimensional space.

Solving MFGs encompasses and generalizes several key problems.
In the cost $\calJ(\tilde{\rho},\tilde{v};\rho)$, when the terms involving $\rho$ are the first variation of some potential functionals, the MFG problem~\eqref{eq: mfg rho v} admits a variational formulation (Proposition \ref{prop:MFC--MFG}). This class of problems is commonly referred to as potential MFGs or mean-field control (MFC) in the literature. On the other hand, when the cost $\calJ(\tilde{\rho},\tilde{v};\rho)$ is independent of $\rho$, the individual has no interactions with the population and the MFG problem~\eqref{eq: mfg rho v} reduces to an individual optimal control (OC) problem.
Notably, the dynamic optimal transport problem~\citep{Benamou2000BBformulation}, with a terminal constraint relaxed to Kullback–Leibler (KL) divergence, can be formulated as an MFG. This formulation in high dimensions is an important model for normalizing flows in generative modeling~\citep{huang2023bridgingMFGandNF,xu2025qflow}.

Despite substantial recent efforts, solving high-dimensional MFGs remains challenging.
Among others, one difficulty is due to their inherent fixed-point structure. 
In game theory, it is known that a na\"{i}ve fixed-point iteration $\rho^{(\ell+1)}=\hat{\rho}^{(\ell)}$ may not converge. A widely used alternative is {\it fictitious play}~\citep{cardaliaguet2017ficplay} $\rho^{(\ell+1)}:=(1-\alpha_{\ell})\rho^{(\ell)}+\alpha_{\ell}\hat{\rho}^{(\ell)}$, which converges for suitable $\alpha_{\ell}\in(0,1]$ under certain assumptions~\citep{yu2024ficplay} and has close connections to the generalized Frank-Wolfe algorithm~\citep{lavigne2023ficplaycondgrad}.
Despite its theoretical appeal, directly implementing fictitious play in high-dimensional settings becomes impractical because computing the exact best response is expensive.
Nevertheless, it suggests that even when the objective changes at each step, moving toward the best response of the current step eventually leads to an MFNE.

In this work, we propose an iterative neural method for high-dimensional MFGs that, at each step, jointly updates particle trajectories in Lagrangian coordinates and trains a Flow-Matching (FM) network to parameterize the velocity field. 
Specifically, we propose a proximal best response scheme motivated by fictitious play. Our approach limits the search to a local neighborhood around the current state instead of seeking a global best response and averaging at each iteration as done in classical fictitious play.
To implement this scheme, 
we introduce a concurrent particle and FM update. 
In each iteration, we first update a set of particles directly using the gradient-based information. 
Next, we train a neural flow model $v_{\theta}$ to match the velocity of the updated particle trajectories via a mean-squared loss as in FM ~\citep{albergo2023stointerp,lipman2023flowmatching,liu2022rectifiedflow}. 
While our setting differs from the original setup of FM,
where the endpoint distributions are given (accessed via finite samples) and the goal is to interpolate to the target distribution from a source distribution, the FM step in our method has the effect of disentangling the sampled particle trajectories and ensuring that the marginal sample distributions are preserved (see Figure \ref{fig:particle_ode_trajectories}). 
Theoretically, we prove the convergence rate of the proposed algorithm, and the property of FM to preserve marginal distribution is used in our analysis. In practice, the FM step allows $v_\theta$ to be trained over batches of sampled trajectories and accelerates convergence. 

We summarize our contribution as follows:
\begin{enumerate}
    \item We propose a proximal fixed-point scheme 
    that consists of 
    a particle optimization step
    and a Flow Matching step concurrently in each iteration
    to solve high-dimensional MFGs. 
    In particular, our scheme can solve potential MFG (including the optimal control setting) as well as general MFG by leveraging the fictitious play approach.    
    \item Theoretically, we prove that the particle-based proximal fixed-point scheme converges sublinearly and linearly with additional convexity assumptions.
    Our convergence rates hold in the optimal control setting,
    while some intermediate results apply to general MFGs. 
    In particular, our proof uses a ``trajectory disentangling'' property of the flow matching, which also allows us to obtain 
    certain equivalence between the Lagrangian (particle-based) and Eulerian (density-based) formulations, assuming that the Eulerian solution is sufficiently regular. 
    
    \item We apply the proposed algorithm to simulated and image datasets,
    including a non-potential MFG
    and the image-to-image translation task formulated as a relaxed OT problem as an MFG. 
\end{enumerate}

\begin{figure}[t]
    \centering
    \begin{subfigure}[t]{0.485\textwidth}
        \centering
        \includegraphics[height=4.5cm]{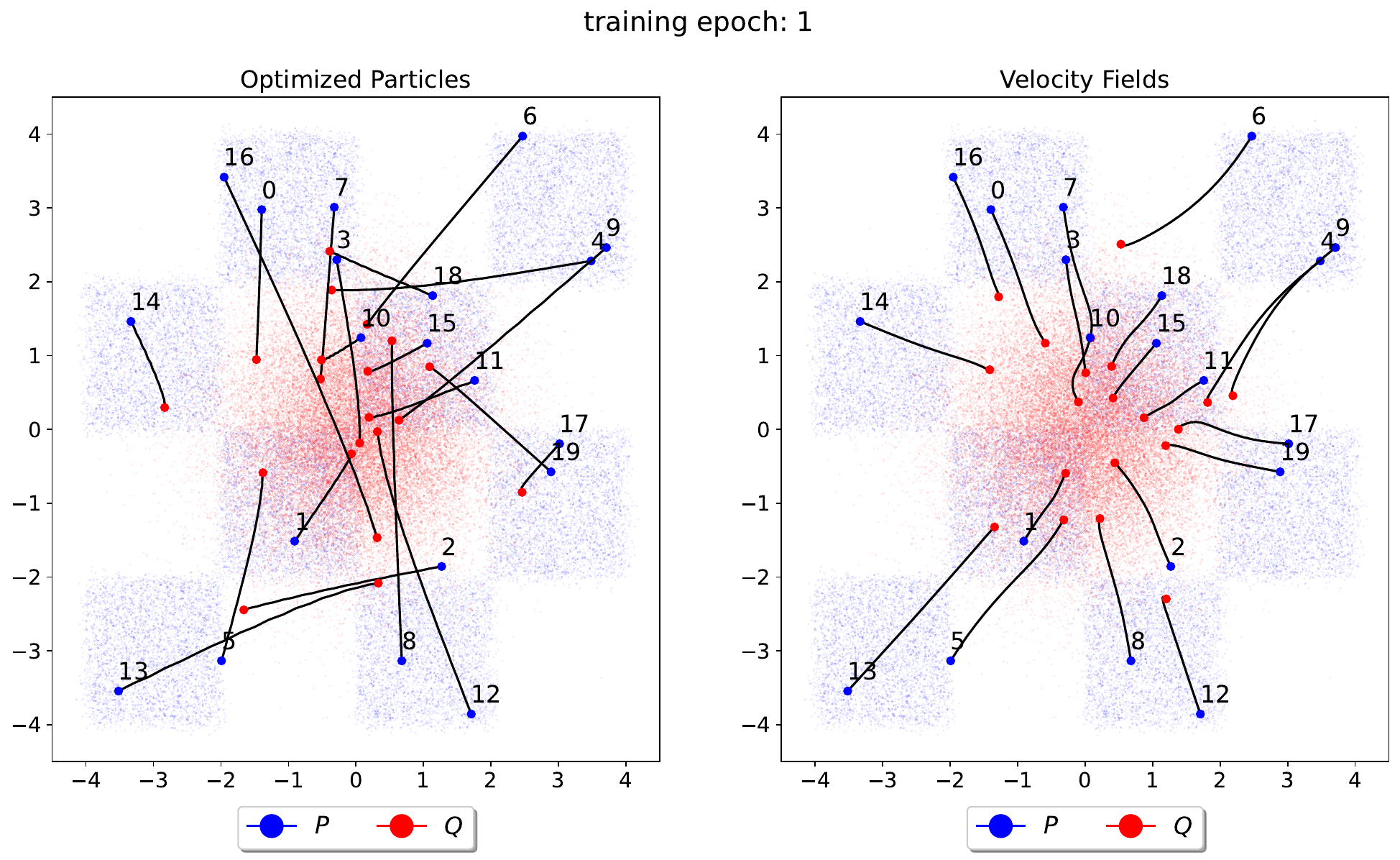}
        \caption{} 
        \label{fig:ode_traj_epoch1}
    \end{subfigure}
    \begin{subfigure}[t]{0.485\textwidth}
        \centering
        \includegraphics[height=4.5cm]{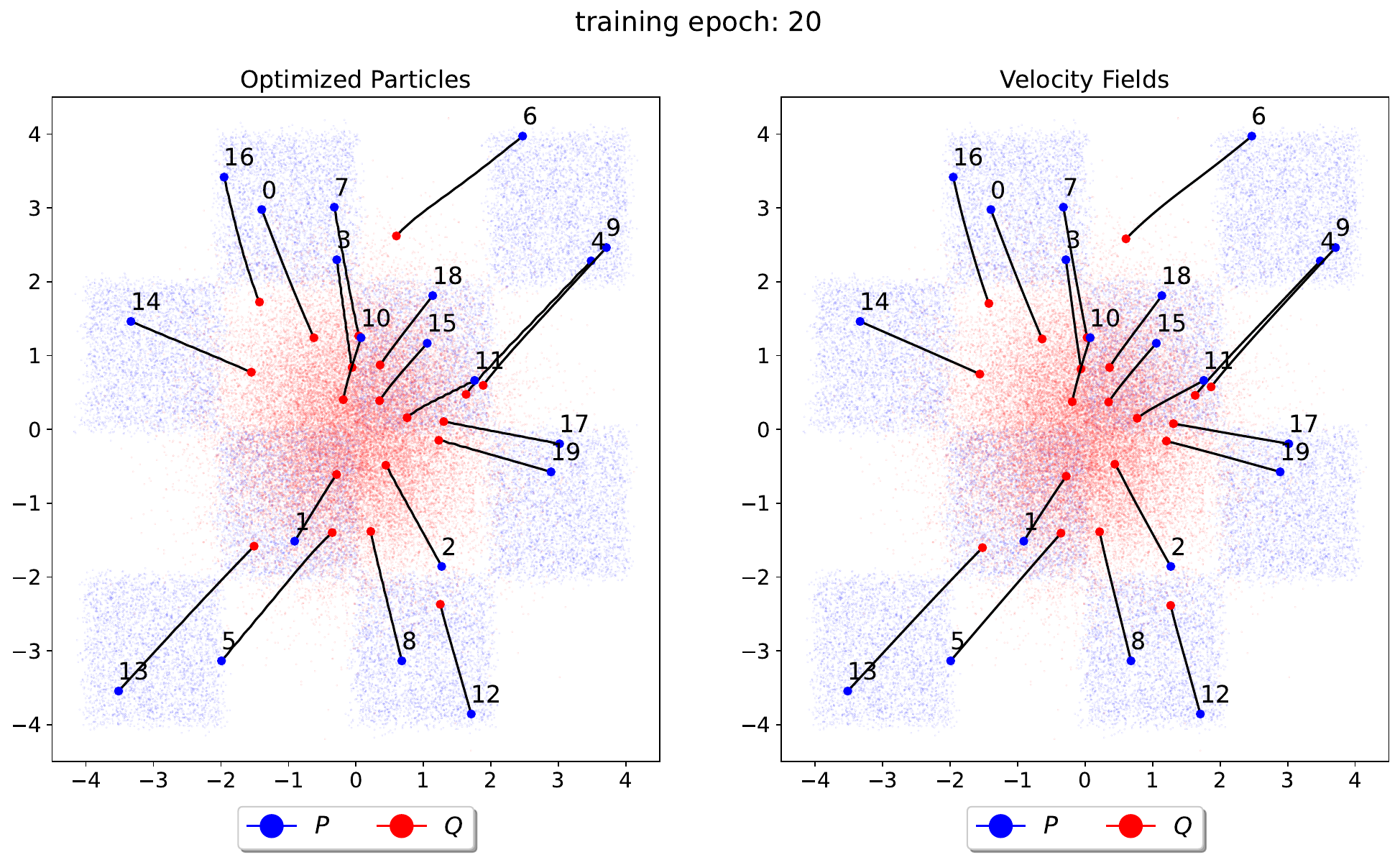}
        \caption{} 
        \label{fig:ode_traj_epoch20}
    \end{subfigure}
    \vspace{-5pt}
    \caption{
    Illustration of the trajectory disentangle effect by Flow Matching (FM). 
    (a) and (b) show the sample trajectories at the beginning of training and after 20 epochs of training, respectively,
    where ``optimized particles'' stands for trajectories after a particle update,
    and ``velocity field'' shows trajectories resampled from a learned velocity field by FM, see Algorithm \ref{alg: gradient descent}.
    Theoretically, we prove that FM disentangles the trajectories and reduces the dynamic cost, while leaving the interaction and terminal costs unchanged.}
    \label{fig:particle_ode_trajectories}
\end{figure}

\subsection{Related works}

\paragraph{Fictitious play in solving MFGs}

Fictitious play, first introduced in~\citep{brown1949ficplay,brown1951ficplay}, is a classical algorithm in game theory. 
Leveraging PDE analysis, ~\citep{cardaliaguet2017ficplay} applied the strategy to MFG 
and proved convergence for MFC, namely the class of MFG with variational formulation.
The fictitious play is related to the generalized Frank–Wolfe algorithm, 
as pointed out in \citep{lavigne2023ficplaycondgrad},
and the work furtherly used this connection to accelerate the algorithm and also proved a faster convergence rate in the MFC setting. 
These results were extended to general MFGs in the recent work of~\citep{yu2024ficplay}, which obtained similar acceleration without relying on a variational structure.
Computation-wise, the implementations in~\citep{lavigne2023ficplaycondgrad,yu2024ficplay} are based on the PDE formulation, and specifically, via the optimality conditions of the optimization problem in~\eqref{eq: mfg rho v}, which results the coupled system of Hamilton–Jacobi–Bellman and Fokker–Planck equations. The algorithms therein utilize mesh-based solvers, which achieve numerical accuracy in low-dimensional settings but encounter challenges in scaling to high dimensions.

Besides PDE-based implementations, other works exploited the connection between MFGs and reinforcement learning (RL)~\citep{lauriere2020ficplay_finitestate,guo2023general},
and used RL techniques to implement fictitious play. 
These RL methods typically assume a finite or discrete state space, such as a graph, and may not be efficiently applied to continuous spaces like $\mathbb{R}^d$.
For example, if one adopts a mesh-based discretization, then the algorithm would still face the curse of dimensionality when $d $ is large. 
We note that most of these works consider an infinite time horizon, in which case the problem can be reduced to finding a time-invariant equilibrium. 
In contrast, the finite-horizon case requires solving a dynamic problem that evolves over time and thus is a more challenging problem. In this work, we focus on the finite time horizon rescaled to $[0,1]$. 

\paragraph{Flow-based methods and simulation-free methods}
To avoid discretizing the state space, recent works use neural networks to represent the trajectories of particles and define the objective directly in terms of network parameters. For example,~\citep{Ruthotto2020pnas,Zhou2025FBODEforMFC,assouli2025initialization} parameterize the value function with a neural network; the particle trajectories are then induced by the value function via ODEs. \citep{huang2023bridgingMFGandNF} uses discrete normalizing flows to model particle transitions between time steps. 
A common limitation of these methods is that they assume a variational formulation of MFGs, which only exists in special cases (e.g., potential MFGs or MFC). Even in those cases, the objective depends nonlinearly on the population distribution $\rho$, which is coupled with the velocity field $v$ through a PDE constraint. As a result, training requires backpropagating through PDE or ODE solvers, which is computationally expensive.

Simulation-free methods avoid solving PDEs or ODEs during the forward process of a neural network, and therefore simplify backpropagation and lead to significantly lower computational cost.
These methods have recently been developed for stochastic optimal control (SOC), where the cost functions $F$ and $G$ are independent of the population distribution $\rho$, and the dynamics include stochastic noise.
\citep{hua2025ICLRSimFreeSOC} derive analytical gradient expressions that avoid simulation and do not decouple the distribution from the control. Other works~\citep{domingo-enrich2024SOCmatching,domingo-enrich2025adjointmatching} use a decoupling approach, training the flow through control matching or adjoint equations.
In comparison, our method addresses the first-order MFGs, which is a broad class of problems, and in particular, allows $F$, $G$ to depend on $\rho $. 
Conceptually, our approach leverages the Lagrangian coordinates formulation of MFGs and trains the flow velocity field by directly matching to sample velocities.

\paragraph{Deep network methods for MFGs}

Beyond the works mentioned above, several other deep learning approaches have been developed for solving MFGs. \citep{Lin2021alternating,Gomes2023dualascent} formulate the problem as a minimax optimization and alternate updates between two variables to find a saddle point. \citep{chen2023hybrid,zhou2025ACMFG,fouque2025ACMFG} adapt actor-critic methods from reinforcement learning to the MFG setting. \citep{chen2023PIMFG} uses the PDE formulation of MFGs and designs two neural network modules to force the HJB and FP together.
Our approach adopts flow matching techniques in modern deep generative models, and we demonstrate the efficiency of our method on high-dimensional image data.

\subsection{Notations} 
We summarize the notations in Table~\ref{tab:notation} (Appendix~\ref{apsec:notation}). 
A hat~$\hat{\cdot}$ denotes best response quantities. 
Let $\mathcal{P}_2(\mathbb{R}^d)$ be the space of Borel probability measures with finite second moments. We use both the $L^2$ and Wasserstein-2 ($W_2$) metrics on $\mathcal{P}_2(\mathbb{R}^d)$, specifying which is used when needed.
For $\mu \in \mathcal{P}_2(\mathbb{R}^d)$, we define $L^2(\mu)$ as the Hilbert space of vector fields $\xi: \mathbb{R}^d \to \mathbb{R}^d$ with $\int \|\xi(x)\|^2 \dd\mu(x) < \infty$, inner product $\langle \xi, \eta \rangle_{\mu}:=\int \xi(x)\cdot\eta(x)\dd\mu(x)$, and the induced norm $\|\cdot\|_{\mu}$. 
Similarly, $L^2(\mu \otimes [0,1])$ denotes time-dependent vector fields $\xi: \mathbb{R}^d \times [0,1] \to \mathbb{R}^d$ with $\int_0^1\|\xi_t\|_{\mu}^2\dd t <\infty$, inner product $\langle \xi, \eta \rangle_{\mu \otimes [0,1]}:=\int_0^1\langle\xi_t,\eta_t\rangle_{\mu}\dd t$ and induced norm $\|\cdot\|_{\mu \otimes [0,1]}$.

The symbol $\rho$ may denote a static distribution in $\mathcal{P}_2(\mathbb{R}^d)$ or a time-dependent mapping $[0,1] \to \mathcal{P}_2(\mathbb{R}^d)$, with $\rho_t$ denoting the distribution at time $t$. Similar conventions apply to $v$ and $X$, which may refer to functions $\mathbb{R}^d \to \mathbb{R}^d$ or time-dependent maps $\mathbb{R}^d \times [0,1] \to \mathbb{R}^d$.
Constraint sets are denoted by $C_{\cdot}$, where the subscript indicates the constrained variable. 
Objective functionals are denoted by $\mathcal{J}$. For instance, $\mathcal{J}(\tilde{\rho}, \tilde{v}; \rho)$ evaluates the cost of a distribution-velocity flow under a fixed reference flow $\rho$, and $\mathcal{J}(X; \rho)$ evaluates the cost of a characteristic map under a fixed reference flow $\rho$.

\section{Preliminaries}
\label{sec: pre}
In this section, we begin by reviewing the Eulerian and Lagrangian formulations of flow dynamics and their relations.
We then review the formulation of mean-field games (MFGs), along with two important special cases: potential MFGs or mean-field control problems, and optimal control problems. 
Finally, we summarize the fictitious play algorithm and its convergence.

\subsection{Eulerian and Lagrangian coordinates}

Let $[0,1]$ be the time interval and $\mu\in\calP_2^r(\bbR^d)$ be a fixed initial distribution that has density, where $\calP_2^r(\bbR^d) = \calP_2(\bbR^d) \cap AC(\bbR^d) $.
In MFGs, the distribution evolution $\tilde{\rho}$ is induced by a control flow $\tilde{v}$ via the continuity equation. 
We restrict our focus to distribution-control pairs $(\tilde{\rho},\tilde{v})$ in the constraint set 
\begin{equation}
    C_{(\rho,v)}:=\left\{(\tilde{\rho},\tilde{v}):  \partial_t\tilde{\rho}+\nabla\cdot(\tilde{\rho} \tilde{v})=0,\, \tilde{\rho}(\cdot,0)=\mu,\, \tilde{\rho}\in C_{\rho},\, \tilde{v}_t\in L^2(\tilde{\rho}_t)
    \right\}.
\label{eq: cstr set of rho,v}
\end{equation}
Here $
C_{\rho}=AC^2(0,1;(\calP_2(\bbR^d),W_2))$ is the set of absolutely continuous curves in Wasserstein space (Definition~\ref{def:ac curve} in the appendix).
According to~\citep[Thm.~8.3.1]{AGS_book_gradientflows}, for any $\rho \in C_\rho$, there exists a Borel vector field $v$ such that $v_t \in L^2(\rho_t)$ and the pair $(\rho_t, v_t)$ satisfies the continuity equation in the sense of distributions.
From a fluid dynamics perspective, $v(x,t)$ describes the motion of fluid by observing the change at a fixed spatial location over time and is known as the {\it Eulerian} coordinate. It is analogous to monitoring the population density at a fixed state $x$ over time.
Due to its reliance on a fixed spatial grid or mesh, Eulerian simulation becomes impractical in high dimensions.

Alternatively, one may adopt a particle-based perspective by considering $X\in C_X:=H^1(0,1;L^2(\mu))$.
Here, $X(x,\cdot):[0,1]\to\mathbb{R}^d$ represents the trajectory of an individual agent starting from position $x$ with $X(x,0)=x$.
For any a.e. $t$, the weak derivative $\partial_t\tilde{X}(\cdot,t)$ exists and is in $L^2(\mu)$.
When all particles evolve under the same velocity field, the Eulerian and Lagrangian descriptions are linked by the ordinary differential equation:
\begin{equation}
    \partial_t{X}(x,t) = v(X(x,t),t).
\label{eq: Lagrangian ODE}
\end{equation}
and $X$ is known as the {\it Lagrangian} coordinates.

Let $C_{(X,v)}$ be the constraint set where the ODE is satisfied for $\mu$-a.e. $x$ and a.e. $t$:
\begin{equation}
\label{eq: C_X,v}
(\tilde{X},\tilde{v})\in C_{(X,v)}:= \left\{ (\tilde{X},\tilde{v}): \partial_t\tilde{X}(x,t) = \tilde{v}(\tilde{X}(x,t),t),\, \tilde{X}(x,0)=x,\, \tilde{X}\in C_X  \right\}.   
\end{equation}
Notice that for $X\in C_X$, the trajectories may intersect, and there may not exist $v$ such that $(X,v)\in C_{(X,v)}$. In Section~\ref{sec:theory}, we show that by flow matching, there exist $(\rho,v)\in C_{(\rho,v)}$ such that $v$ takes average on intersecting points and $\rho_t=(X_t)_\#\mu$.
On the other hand, given $(\rho,v)\in C_{(\rho,v)}$, the ODE solution $X$ to $v$ may not always exist and may not be unique if it exists.
Let $C_v$ be the set of functions $v:\bbR^d\times[0,1]\to\bbR^d$ that are bounded and Lipschitz continuous in $x$ on every compact set $B\subset\bbR^d$. Then, by a standard existence and uniqueness theorem for ODEs (see, for example, \citep[Lemma 8.1.4]{AGS_book_gradientflows}), there exists a unique solution $X$ such that $(X,v)\in C_{(X,v)}$ and $(X_t)_\#\mu=\rho_t$.

In the studies of dynamic OT and application to optimal control, a useful extension of the classical ODE flow is via measures on the path space (Superposition Principle \citep{Ambrosio2008superposition, AGS_book_gradientflows}), which, e.g., allows branching of trajectories that follow the velocity field $v$ in a weaker sense \cite{Lisini2006ACcurveinWspace,cavagnari2022LEK}.
In this paper, we will parametrize the velocity field $v$ by a neural network, which in practice usually gives a regular $v$. 
Thus, we focus on when $v$ is regular, and specifically when the solution to MFG \eqref{eq: mfg rho v} has $v \in C_v$.

\subsection{Mean-field games}

Given a population distribution evolution $\rho \in C_{\rho}$, the individual cost of adopting strategy $(\tilde{\rho},\tilde{v})$ depends on $\rho$. 
Precisely, let $F, G : \calP_2(\bbR^d) \times \bbR^d \to \bbR$ be operators acting on the population measure. For $\rho_t \in \calP_2(\bbR^d)$, $F[\rho_t] : \bbR^d \to \bbR$ is a function on $\bbR^d$. It assigns an interaction cost $F[\rho_t](x)$ to the individual at position $x$ at time $t$ given the population distribution $\rho_t$.
The total individual cost in response to $\rho$ is defined by
\begin{equation}
    \calJ(\tilde{\rho},\tilde{v};\rho):= \int_0^1\int  \left(\frac{1}{2}\|\tilde{v}(x,t)\|^2 +  F[\rho_t](x) \right) \dd \tilde{\rho}_t(x)\dd t + \int G[\rho_1](x) \dd \tilde{\rho}_1(x).
\label{eq: energh J(rho,v;rho)}
\end{equation}
A pair $(\hat{\rho}, \hat{v})$ that minimizes $\calJ(\tilde{\rho}, \tilde{v}; \rho)$ over $(\tilde{\rho}, \tilde{v})\in C_{(\rho, v)}$ is called the best response to $\rho$.
The objective in an MFG is to solve~\eqref{eq: mfg rho v} to find an MFNE, which is a triple $(\hat{\rho},\hat{v};\rho)$ such that the individual best response is consistent with the population, i.e., $\hat{\rho}=\rho$.
In the objective~\eqref{eq: energh J(rho,v;rho)}, we refer to $F$ as the interaction coupling and $G$ as the terminal coupling, where $F[\rho_t]$ and $G[\rho_1]$ represent the interaction cost and terminal cost, respectively. The couplings $F,G$ are typically either local, of the form $F[\rho](x)=f(p(x))$ with $f:\bbR_{\geq 0}\to\bbR$ and $p$ being the density of $\rho$, or nonlocal of form $F[\rho](x)=(K_x*\rho)(x)$ with $K_x$ being the kernel at $x$.

When the couplings $F,G$ are $L^2$ first variations of functionals $\calF,\calG$ on $\calP_2(\bbR^d)$, \eqref{eq: energh J(rho,v;rho)} takes the form as 
\begin{equation}
    \calJ(\tilde{\rho},\tilde{v};\rho)
    := \int_0^1\int 
        \left( \frac{1}{2}\|\tilde v(x,t)\|^2 
        + D\calF[\rho_t](x)\right)
        \dd  \tilde \rho_t(x)\dd t
        + \int  D\calG[\rho_1](x) \dd \tilde{\rho}_1(x),
\label{eq: energh J(rho,v;rho)-potential}
\end{equation}
and the formal definition of the derivatives $D\calF$, $D\calG$ can be found in the appendix (Definition~\ref{def:derivative}).
In this case, the MFG problem~\eqref{eq: mfg rho v} is associated to a variational form \citep{lasry2007mfg,Achdou2021MFGbook}. 
We summarize the result in the following proposition, and include a proof in Appendix~\ref{apsec: proof MFC--MFG} for completeness. 
\begin{proposition}
\label{prop:MFC--MFG}
Let $\calF:\calP_2(\bbR^d)\to\bbR\cup\{+\infty\}$ and $\calG:\calP_2(\bbR^d)\to\bbR^d\cup\{+\infty\}$ be proper and consider the following optimization problem:
\begin{equation}
\begin{aligned}
    \inf_{(\rho,v)\in C_{(\rho,v)}} &\quad \calJ(\rho,v):= \int_0^1\int \frac{1}{2}\|v(x,t)\|^2\dd \rho_t(x)\dd t + \int_0^1 \calF(\rho_t)\dd t + \calG(\rho_1),\\
\end{aligned}
\label{eq: mfc rho v}
\end{equation}
If $\calF,\calG$ have first variations $F,G:\calP_2(\bbR^d)\times\bbR^d\to\bbR\cup\{+\infty\}$ under $L^2$ metric, 
and $(\hat{\rho},\hat{v})\in C_{(\rho,v)}$ is the minimizer to~\eqref{eq: mfc rho v}, 
then $(\hat{\rho},\hat{v})$ solves the fixed-point problem~\eqref{eq: mfg rho v}.
If, in addition, $\calF,\calG$ are convex in $\rho$ under $L^2$ metric, then $(\hat{\rho},\hat{v})$ is the minimizer to~\eqref{eq: mfc rho v} if and only if $(\hat{\rho},\hat{v})$ solves the fixed-point problem~\eqref{eq: mfg rho v}.
\end{proposition}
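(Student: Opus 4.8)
The plan is to establish the two directions of the proposition by computing first variations of the functional $\calJ(\rho,v)$ in \eqref{eq: mfc rho v} and relating them to the best-response condition defining \eqref{eq: mfg rho v}. First I would recall that a minimizer $(\hat\rho,\hat v)$ of $\calJ$ over $C_{(\rho,v)}$ satisfies a variational inequality: for any admissible perturbation $(\tilde\rho,\tilde v)\in C_{(\rho,v)}$, the directional derivative of $\calJ$ at $(\hat\rho,\hat v)$ toward $(\tilde\rho,\tilde v)$ is nonnegative. Because the term $\int_0^1\calF(\rho_t)\dd t + \calG(\rho_1)$ has first variation $D\calF[\hat\rho_t]$, $D\calG[\hat\rho_1]$ by hypothesis, this directional derivative decomposes into exactly the linearization of the MFG cost $\calJ(\tilde\rho,\tilde v;\hat\rho)$ from \eqref{eq: energh J(rho,v;rho)-potential} around $(\hat\rho,\hat v)$. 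The key identity is that the kinetic term $\int_0^1\int \tfrac12\|v\|^2\dd\rho_t\dd t$ is literally the same in both $\calJ(\rho,v)$ and $\calJ(\rho,v;\rho)$ when the reference equals the argument, so the only discrepancy between differentiating $\calJ$ and differentiating $\calJ(\cdot;\hat\rho)$ is that the latter freezes $\rho$ in the coupling terms — and that frozen part is precisely the first-variation contribution that appears when differentiating the potential $\calF,\calG$.

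Next, for the first (unconditional) direction, I would argue as follows: since $C_{(\rho,v)}$ is convex (the continuity equation is linear in $(\rho,\rho v)$ after the standard change of variables, and $C_\rho$ is convex), for any competitor $(\tilde\rho,\tilde v)$ the segment $(1-s)(\hat\rho,\hat v) + s(\tilde\rho,\tilde v)$ stays admissible, and minimality of $\calJ$ gives $\frac{d}{ds}\big|_{s=0^+}\calJ\ge 0$. Expanding this derivative and using that $D\calF,D\calG$ are the first variations, one obtains $\frac{d}{ds}\big|_{s=0^+}\calJ(\hat\rho_s,\hat v_s;\hat\rho)\ge 0$, i.e. $(\hat\rho,\hat v)$ is a critical point of the convex-in-$(\rho,v)$ functional $(\tilde\rho,\tilde v)\mapsto\calJ(\tilde\rho,\tilde v;\hat\rho)$ (it is convex because the kinetic energy is jointly convex in $(\rho,\rho v)$ and the coupling terms are now linear in $\tilde\rho$). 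A critical point of a convex functional over a convex set is a global minimizer, hence $(\hat\rho,\hat v)\in\Argmin_{(\tilde\rho,\tilde v)\in C_{(\rho,v)}}\calJ(\tilde\rho,\tilde v;\hat\rho)$ with $\rho=\hat\rho$, which is exactly \eqref{eq: mfg rho v}.

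For the converse under the added convexity of $\calF,\calG$ in $\rho$: suppose $(\hat\rho,\hat v)$ solves the fixed point \eqref{eq: mfg rho v}. Then for every $(\tilde\rho,\tilde v)\in C_{(\rho,v)}$, $\calJ(\hat\rho,\hat v;\hat\rho)\le\calJ(\tilde\rho,\tilde v;\hat\rho)$. I would use convexity of $\calF$ and $\calG$, which gives the gradient-type inequality $\calF(\tilde\rho_t)\ge\calF(\hat\rho_t)+\langle D\calF[\hat\rho_t],\tilde\rho_t-\hat\rho_t\rangle$ and similarly for $\calG$; integrating in $t$ and adding the (common) kinetic term shows $\calJ(\tilde\rho,\tilde v)\ge\calJ(\hat\rho,\hat v) + \big[\calJ(\tilde\rho,\tilde v;\hat\rho) - \calJ(\hat\rho,\hat v;\hat\rho)\big]$, and the bracket is $\ge 0$ by the fixed-point property. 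Hence $\calJ(\tilde\rho,\tilde v)\ge\calJ(\hat\rho,\hat v)$ for all admissible $(\tilde\rho,\tilde v)$, so $(\hat\rho,\hat v)$ minimizes \eqref{eq: mfc rho v}.

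The main obstacle I anticipate is making the first-variation / directional-derivative computation rigorous: one must justify differentiating $\calJ$ along the segment $(\hat\rho_s,\hat v_s)$ under the integral sign, handle the fact that the natural linear structure on $C_{(\rho,v)}$ is in the momentum variable $m=\rho v$ rather than in $v$ itself (so the kinetic term $\int\tfrac12\|m\|^2/\rho$ is what is convex), and confirm that the perturbed pair actually lies in $C_{(\rho,v)}$ including the $L^2$-integrability and $AC^2$ regularity constraints. I would sidestep the technical subtleties by working in the $(\rho,m)$ variables where convexity and linearity of the constraint are transparent, invoking the definition of first variation (Definition~\ref{def:derivative}) to identify the linear term, and noting that the endpoint constraint $\rho_0=\mu$ and the continuity equation are preserved under convex combinations. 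The remaining steps are then essentially the convex-analysis facts quoted above and do not require further estimates.
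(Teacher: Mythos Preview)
Your proposal is correct and follows a genuinely different route from the paper. The paper's proof passes through the PDE optimality system: after the same change of variables $(\rho,m)=(\rho,\rho v)$ it writes down the coupled Hamilton--Jacobi--Bellman / Fokker--Planck system (with dual variable $\phi$ and $v=-\nabla\phi$) as the first-order condition for both the best-response problem $\min_{(\tilde\rho,\tilde m)}\calJ(\tilde\rho,\tilde m;\rho)$ and the potential problem $\min_{(\rho,m)}\calJ(\rho,m)$, observes the two systems coincide when $\rho=\hat\rho$, and concludes. You instead stay entirely at the level of variational inequalities: you match the directional derivative of $\calJ$ at the minimizer with that of $\calJ(\cdot\,;\hat\rho)$ (the key identity being that freezing $\rho$ in the coupling terms contributes exactly the first-variation piece), and then invoke convexity of $\calJ(\cdot\,;\hat\rho)$ in $(\rho,m)$ to upgrade the first-order condition to global optimality; for the converse you use the subgradient inequality for convex $\calF,\calG$ directly. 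Your argument is more self-contained and avoids introducing the dual potential $\phi$, while the paper's version has the advantage of displaying the classical MFG PDE system explicitly. Both hinge on the same structural observation you correctly flag in your last paragraph: the convex/linear structure lives in $(\rho,m)$, not $(\rho,v)$, so the interpolation and the convexity of the kinetic term $\int\tfrac12\|m\|^2/\rho$ must be read there.
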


The key for the proposition to hold is that, up to the change of variables $(\rho, m) = (\rho, \rho v)$, the continuity constraint becomes linear and the dynamic cost becomes convex.  
With this change of variable, it becomes straightforward that the fixed-point formulation~\eqref{eq: mfg rho v} is equivalent to the first-order optimality condition of the variational problem~\eqref{eq: mfc rho v}.

The variational problem~\eqref{eq: mfc rho v} is referred to as a potential MFG or mean-field control (MFC) problem.
$\calF$ and $\calG$ are referred to as interaction (potential) cost and terminal (potential) cost.
The variational structure facilitates the usage of many optimization algorithms and connects the MFG~\eqref{eq: mfg rho v} to several important optimization problems.

For example, when the interaction cost is removed and the terminal cost is a Kullback–Leibler (KL) divergence, the MFC \eqref{eq: mfc rho v} reduces to a normalizing flow with optimal transport regularization, which is widely used in generative modeling~\citep{Onken2021OTFlow,huang2023bridgingMFGandNF}.
Furthermore, since the KL divergence is convex, solving the MFG \eqref{eq: mfg rho v} is sufficient to solve the regularized normalizing flow \eqref{eq: mfc rho v}.
In the case of zero interaction cost and general terminal costs $\calG$, the MFC \eqref{eq: mfc rho v} recovers the Jordan–Kinderlehrer–Otto (JKO) scheme~\citep{jordan1998JKO} for Wasserstein gradient flows, which has applications in both generative modeling~\citep{xu2023JKO-iFlow} and solving high-dimensional kinetic equations~\citep{huang2024JKOforLandau}.

When $F$ and $G$ are functions on $\bbR^d$ and independent of $\rho$, the game is potential with $\calF(\rho)=\int F(x)\dd\rho(x)$ and $\calG(\rho)=\int G(x)\dd\rho(x)$ linear in $\rho$, and both \eqref{eq: mfg rho v} and \eqref{eq: mfc rho v} reduce to a single player optimal control (OC) problem. 
The costs $\calF$ and $\calG$ being linear in $\rho$ makes it convenient to approximate them with expectations.

\subsection{Fictitious play}

Fictitious play is a classical algorithm in game theory, first introduced in~\citep{brown1951ficplay} and later adapted to MFGs in~\citep{cardaliaguet2017ficplay}. 
The update rule is given by:
\begin{equation}
\left\{\begin{aligned}
    &(\hat{\rho}^{(\ell)},\hat{v}^{(\ell)}) \in \Argmin_{(\tilde{\rho},\tilde{v})\in C_{(\rho,v)}}\calJ(\tilde{\rho},\tilde{v};\rho^{(\ell)}),\\
    &\rho^{(\ell+1)} = (1-\alpha_{\ell})\rho^{(\ell)} + \alpha_{\ell} \hat{\rho}^{(\ell)}.
\end{aligned}\right.
\label{eq: ficplay}
\end{equation}
When $\alpha_{\ell}=1$, the update becomes a fixed-point iteration, which is shown to potentially diverge~\citep[Sec. 5.1]{yu2024ficplay}.
For potential MFGs,~\citep{cardaliaguet2017ficplay} proved the convergence using a diminishing weight $\alpha_{\ell}=\frac{1}{\ell}$. Further,~\citep{lavigne2023ficplaycondgrad} showed that fictitious play is equivalent to the generalized Frank-Wolfe algorithm for potential MFGs, and proved a convergence rate of $\mathcal{O}(\ell^{-p})$ for $\alpha_{\ell}=\frac{p}{\ell+p}$ (with $p>0$). More recently, \citep{yu2024ficplay} extended the same convergence rate to general (non-potential) MFGs and highlighted that the best choice of the weight $\alpha_{\ell}$ depends on the local convexity of the dynamic cost.

\section{Method}
\label{sec:method}

We first reformulate the MFG problem \eqref{eq: mfg rho v} in Lagrangian coordinates and express the objective as an expectation.
We then propose a proximal fixed-point algorithm that alternates between updating the particles $X$ and the velocity field $v$.
Our method utilizes optimization over sample particle trajectories, namely ``particle-based'', which is more scalable to high-dimensional spaces.

\subsection{Reformulation in Lagrangian coordinates}

Consider characteristic maps $X:\bbR^d\times[0,1]\to\bbR^d$ in the set $C_X$.
For $\tilde{X}\in C_X$, $\tilde{X}(x,\cdot)$ denotes the trajectory of a sample particle starting from $x$.
For a given $\rho\in C_{\rho}$, define the individual control objective in MFG as
\begin{equation}
\label{eq: obj mfg X}
\calJ(\tilde{X};\rho):= \mathbb{E}_{x\sim\mu}\left[\int_0^1  \left(\frac{1}{2}\|\partial_t\tilde{X}(x,t)\|^2 
    + F[\rho_t](\tilde{X}(x,t)) \right) \dd t 
    + G[\rho_1](\tilde{X}(x,1))\right],\\   
\end{equation}
and population objective in potential MFG as
\begin{equation}
\label{eq: obj mfc X}
\calJ(\tilde{X})
    :=\mathbb{E}_{x\sim\mu}\left[\int_0^1\frac{1}{2}\|\partial_t\tilde{X}(x,t)\|^2\dd t \right]
    + \int_0^1\calF((\tilde{X}_t)_\#\mu)\dd t 
    + \calG((\tilde{X}_1)_\#\mu).
\end{equation}
Using the Lagrangian coordinates, the MFG \eqref{eq: mfg rho v} can be formulated as
\begin{equation}
\label{eq: mfg X}
    \hat{X} \in \Argmin_{\tilde{X}\in C_X} 
    \calJ(\tilde{X};\rho),\quad \rho_t=\hat{\rho}_t:=(\hat{X}_t)_{\#}\mu.
\end{equation}
Similarly, the potential MFG \eqref{eq: mfc rho v} becomes
\begin{equation}
\label{eq: mfc X}
\min_{\tilde{X}\in C_X} \calJ(\tilde{X}).
\end{equation}
Notice that in the reformulation, we only require $\tilde{X} \in C_X$, that is, each trajectory itself is a differentiable curve, but among each other the trajectories may intersect. In other words, for general $\tilde{X} \in C_X$, there may not exist a flow $\tilde{v}$ such that $(\tilde{X},\tilde{v})\in C_{(X,v)}$. 
In section \ref{sec:theory}, we will resolve this issue using Flow Matching (FM): FM will provide a velocity field $\tilde v$ such that the marginal density $\tilde \rho$ of $\tilde X$ satisfies the continuity equation associated with $\tilde v$. In addition, this pair $(\tilde \rho, \tilde v)$ will preserve or lower the objective $\calJ$ compared to the objective on $\tilde X$ (Lemma \ref{lem: X to rho v}).
This result will further allow us to establish certain equivalence between the Eulerian and Lagrangian problems under regularity conditions.

\subsection{Particle optimization and neural flow matching}

A direct adaptation of fictitious play to the Lagrangian formulation~\eqref{eq: mfg X} suggests computing the best response $\hat{X}^{(\ell)}$ to $\rho^{(\ell)}$ and then update by $X^{(\ell+1)}=(1-\alpha_{\ell}) X^{(\ell)}+\alpha_{\ell}\hat{X}^{(\ell)}$ and $\rho^{(\ell+1)}_t=(X^{(\ell+1)}_t)_{\#}\mu$. 
However, solving for the best response can be computationally expensive. Moreover, when the step size $\alpha_{\ell}$ is small, an accurate best response is often unnecessary, as the effect of any approximation error is scaled down by $\alpha_{\ell}$.

By definition, the best response provides a descent direction for the current objective $\calJ(\cdot;\rho^{(\ell)})$.
A key observation from fictitious play is that convergence to a fixed point is possible as long as each update yields a small improvement relative to the current objective, even though the objective itself changes at each iteration.
Motivated by this, we search for the proximal best response at each iteration and propose a proximal fixed-point scheme.
Specifically, given $v^{(\ell)}$, we pick a step size $\alpha_{\ell}$ and do the following three steps:
\begin{enumerate}
    \item Resample and cost computation: solve for $X^{(\ell)}$ and $\rho^{(\ell)}$ and compute $F[\rho^{(\ell)}_t],G[\rho^{(\ell)}_1]$, where  
    \begin{equation}
        (X^{(\ell)},v^{(\ell)})\in C_{(X,v)},\quad
        (\rho^{(\ell)},v^{(\ell)})\in C_{(\rho,v)}.
    \label{eq: update X rho}
    \end{equation}
    
    \item Particle update: update $X$ by proximal descent:
    \begin{equation}
    \begin{aligned}
        X^{(\ell+1/2)} = \argmin_{X\in C_X} 
        ~\calJ(X;\rho^{(\ell)}) + \frac{1}{2\alpha_{\ell}}\left(\|X-X^{(\ell)}\|_{\mu\otimes [0,1]}^2 
        + \|X_1-X_1^{(\ell)}\|_{\mu}^2\right) . \\
    \end{aligned}
    \label{eq: update X}
    \end{equation}

    \item Flow matching update: update $v$ by regression
    \begin{equation}
        v^{(\ell+1)}:=\argmin_{v} \int_0^1\int \left\|v(X^{(\ell+1/2)}(x,t),t) - \partial_tX^{(\ell+1/2)}(x,t)\right\|^2\dd\mu(x)\dd t
    \label{eq: update v}
    \end{equation}

\end{enumerate}

In Lemmas~\ref{lem: descent by X update} and~\ref{lem: descent by v update}, we prove that both the particle update~\ref{eq: update X} and the flow matching update~\ref{eq: update v} are well-defined.
When $X^{(\ell)}$ is not a stationary point, the particle update yields a decrease in the objective function.
More importantly, if $X^{(\ell+1/2)}$ is not induced by a velocity field $v$, then the subsequent flow matching and resampling steps also result in a decay of the objective.
Based on this observation, we perform the flow matching update less frequently in practice, as it contributes to improvement primarily when particle trajectories intersect.

\section{Theory}
\label{sec:theory}

We begin by noting that the original problem is posed in the Eulerian formulation~\eqref{eq: mfg rho v}, while our algorithm operates in the Lagrangian formulation~\eqref{eq: mfg X}. 
In this section, we first derive a certain equivalence between the two formulations in Theorem~\ref{thm: euler=lagrangian}. 
In Section \ref{subsec:cvg}, we analyze the convergence of the proposed scheme~\eqref{eq: update X rho}\eqref{eq: update X}\eqref{eq: update v}, proving its descent property under for general MFGs
and its convergence rates in the optimal control setting. 

\subsection{Equivalence between Eulerian and Lagrangian formulations}

To prove Theorem~\ref{thm: euler=lagrangian}, we first relate $\tilde{X}\in C_X$ to a corresponding pair $(\tilde{\rho},\tilde{v})\in C_{(\rho,v)}$ and compare their associated costs.
Lemma~\ref{lem: X to rho v} shows that for any $\tilde{X}\in C_X$, there is a unique pair $(\tilde{\rho},\tilde{v})\in C_{(\rho,v)}$ obtained by flow matching. In addition, the cost of $\tilde{X}$ is bounded below by the cost of $(\tilde{\rho},\tilde{v})$. 
In the other direction, Lemma~\ref{lem: rho v to X} establishes that the equality holds when $\tilde{X}$ is induced by a velocity field $\tilde{v}\in C_v$.
These two lemmas will also be used in the convergence analysis in Section \ref{subsec:cvg}, and specifically, to prove Lemma \ref{lem: descent by v update}.
The proofs of Lemmas~\ref{lem: X to rho v} and~\ref{lem: rho v to X} are given in Appendices~\ref{apsec: proof lem X to rho v} and~\ref{apsec: proof lem rho v to X}, respectively.

\begin{lemma}
\label{lem: X to rho v}
    Let $\tilde{X}\in C_X$ and set $\tilde{\rho}_t:=(\tilde{X}_t)_{\#}\mu$, $\tilde{v}=\argmin_{v} \int_0^1\int\|v(\tilde{X}(x,t),t)-\partial_t\tilde{X}(x,t)\|^2\dd\mu(x)\dd t$. 
    Then, $\tilde{\rho}\in C_{\rho}$, $(\tilde{\rho},\tilde{v})\in C_{(\rho,v)}$, and $\calJ(\tilde{\rho},\tilde{v};\rho)\leq\calJ(\tilde{X};\rho)$ for any $\rho\in C_{\rho}$.
\end{lemma}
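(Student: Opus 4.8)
\textbf{Proof proposal for Lemma~\ref{lem: X to rho v}.}

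The plan is to build the pair $(\tilde\rho,\tilde v)$ by an explicit conditional-expectation construction and then verify the three claims in order: (i) $\tilde\rho\in C_\rho$, i.e. $t\mapsto(\tilde X_t)_\#\mu$ is an absolutely continuous curve in $(\calP_2(\bbR^d),W_2)$; (ii) the continuity equation holds for $(\tilde\rho,\tilde v)$ in the sense of distributions, so $(\tilde\rho,\tilde v)\in C_{(\rho,v)}$; and (iii) the cost inequality $\calJ(\tilde\rho,\tilde v;\rho)\le\calJ(\tilde X;\rho)$. The key object is the minimizer $\tilde v$ in the $L^2(\mu\otimes[0,1])$ regression: since the map $x\mapsto(\tilde X(x,t),\partial_t\tilde X(x,t))$ pushes $\mu$ forward to a joint law on $\bbR^d\times\bbR^d$, the minimizer is the disintegration
\begin{equation}
\tilde v(y,t)=\bbE_{x\sim\mu}\bigl[\,\partial_t\tilde X(x,t)\,\big|\,\tilde X(x,t)=y\,\bigr],
\label{eq:vtilde_condexp}
\end{equation}
defined $\tilde\rho_t$-a.e.; this is the ``velocity that takes the average on intersecting points'' referred to in the text. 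I would first record that this conditional expectation is well-defined for a.e. $t$ (using that $\partial_t\tilde X(\cdot,t)\in L^2(\mu)$ for a.e.\ $t$ since $\tilde X\in C_X=H^1(0,1;L^2(\mu))$, together with Fubini), and that $\|\tilde v_t\|_{\tilde\rho_t}\le\|\partial_t\tilde X(\cdot,t)\|_\mu<\infty$, hence $\tilde v_t\in L^2(\tilde\rho_t)$ and $\tilde v\in L^2(\tilde\rho\otimes[0,1])$.

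For (i) and (ii), the clean route is a test-function computation: for $\varphi\in C_c^\infty(\bbR^d)$, write $\frac{\dd}{\dd t}\int\varphi\,\dd\tilde\rho_t=\frac{\dd}{\dd t}\bbE_{x\sim\mu}[\varphi(\tilde X(x,t))]=\bbE_{x\sim\mu}[\nabla\varphi(\tilde X(x,t))\cdot\partial_t\tilde X(x,t)]$, where differentiating under the expectation is justified by the $H^1$-in-time regularity of $\tilde X$ and dominated convergence. By the tower property and \eqref{eq:vtilde_condexp} this equals $\bbE_{x\sim\mu}[\nabla\varphi(\tilde X(x,t))\cdot\tilde v(\tilde X(x,t),t)]=\int\nabla\varphi\cdot\tilde v_t\,\dd\tilde\rho_t$, which is exactly the weak form of $\partial_t\tilde\rho+\nabla\cdot(\tilde\rho\tilde v)=0$; the initial condition $\tilde\rho_0=(\tilde X_0)_\#\mu=\mu$ holds since $\tilde X(x,0)=x$. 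Absolute continuity of $\tilde\rho$ in $W_2$ then follows either from $\int_0^1\|\tilde v_t\|_{\tilde\rho_t}\dd t\le\int_0^1\|\partial_t\tilde X(\cdot,t)\|_\mu\dd t<\infty$ via \citep[Thm.~8.3.1]{AGS_book_gradientflows}, or more directly by bounding $W_2(\tilde\rho_s,\tilde\rho_t)\le\|\tilde X(\cdot,s)-\tilde X(\cdot,t)\|_\mu$ (the coupling $x\mapsto(\tilde X(x,s),\tilde X(x,t))$) and using that $\tilde X\in H^1(0,1;L^2(\mu))$ makes $s\mapsto\tilde X(\cdot,s)$ absolutely continuous into $L^2(\mu)$.

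For (iii), compare the three parts of $\calJ$ term by term. The interaction and terminal parts are \emph{equal}, not merely $\ge$: $\int F[\rho_t](x)\,\dd\tilde\rho_t(x)=\bbE_{x\sim\mu}[F[\rho_t](\tilde X(x,t))]$ by the definition of pushforward, and likewise for $G[\rho_1]$ against $\tilde\rho_1=(\tilde X_1)_\#\mu$. For the dynamic (kinetic) part, the inequality is exactly the conditional Jensen / tower-property estimate
\begin{equation}
\tfrac12\int\|\tilde v_t(y)\|^2\,\dd\tilde\rho_t(y)
=\tfrac12\,\bbE_{x\sim\mu}\bigl[\|\bbE[\partial_t\tilde X(x,t)\mid\tilde X(x,t)]\|^2\bigr]
\le\tfrac12\,\bbE_{x\sim\mu}\bigl[\|\partial_t\tilde X(x,t)\|^2\bigr],
\label{eq:kinetic_jensen}
\end{equation}
integrated over $t\in[0,1]$ by Fubini. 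Summing the equal interaction/terminal parts with \eqref{eq:kinetic_jensen} gives $\calJ(\tilde\rho,\tilde v;\rho)\le\calJ(\tilde X;\rho)$. Uniqueness of $(\tilde\rho,\tilde v)$ in $C_{(\rho,v)}$ is immediate: $\tilde\rho$ is fixed as the pushforward, and once $\tilde\rho$ is fixed the continuity equation determines $\tilde v_t$ uniquely in $L^2(\tilde\rho_t)$ up to the kernel of the divergence, but the regression minimizer \eqref{eq:vtilde_condexp} is the unique element of minimal norm, matching the statement's ``unique pair''. I expect the main technical obstacle to be the measure-theoretic bookkeeping around the disintegration \eqref{eq:vtilde_condexp}: making sure the conditional expectation, the null sets in $x$, and the null sets in $t$ all interact correctly (a joint-measurability / Fubini argument), and justifying differentiation under the expectation sign using only $\tilde X\in C_X$; everything else is Jensen plus pushforward identities. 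A secondary point worth care is that $\partial_t\tilde X(\cdot,t)$ denotes the weak time derivative, so one should phrase the computation of $\frac{\dd}{\dd t}\bbE[\varphi(\tilde X(x,t))]$ through the absolute continuity of $t\mapsto\tilde X(\cdot,t)$ in $L^2(\mu)$ rather than a pointwise chain rule.
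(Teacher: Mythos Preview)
Your proposal is correct and follows essentially the same route as the paper: both identify $\tilde v$ as the conditional expectation (the paper phrases this as the Radon--Nikodym derivative of the vector measure $\tilde m_t(B)=\int_{\tilde X_t^{-1}(B)}\partial_t\tilde X\,\dd\mu$ with respect to $\tilde\rho_t$, which is the same object), verify the continuity equation by the identical test-function computation, bound $W_2(\tilde\rho_s,\tilde\rho_t)$ via the coupling $(\tilde X_s,\tilde X_t)_\#\mu$, and observe that the interaction and terminal terms are exactly equal by the pushforward definition. The only cosmetic difference is in the kinetic inequality: you invoke conditional Jensen directly, whereas the paper routes through the Riesz representer of the bounded linear functional $T(\psi)=\int\psi(\tilde X_t)\cdot\partial_t\tilde X_t\,\dd\mu$ on $L^2(\tilde\rho_t)$ to obtain $\|\tilde v_t\|_{\tilde\rho_t}=\|T\|\le\|\partial_t\tilde X(\cdot,t)\|_\mu$; your closing remark that $\tilde v$ is the ``minimal-norm'' solution of the continuity equation is not quite the right justification for uniqueness (the regression minimizer is unique simply because it is an $L^2$ projection), but this does not affect the argument.
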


\begin{lemma}
\label{lem: rho v to X}
    If $(\tilde{\rho},\tilde{v})\in C_{(\rho,v)}$ and $\tilde{v}\in C_v$, then there exists $\tilde{X}\in C_X$ satisfying $(\tilde{X},\tilde{v})\in C_{X,v}$ and for any $\rho\in C_{\rho}$,
    $\calJ(\tilde{X};\rho)=\calJ(\tilde{\rho},\tilde{v};\rho)$.
\end{lemma}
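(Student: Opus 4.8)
The plan is to construct $\tilde{X}$ as the flow map of the ODE \eqref{eq: Lagrangian ODE} driven by $\tilde{v}$, verify it lies in $C_X$ and pairs with $\tilde{v}$ in $C_{(X,v)}$, and then match the two cost functionals term by term. First I would invoke the existence-and-uniqueness statement already recorded in the excerpt: since $(\tilde{\rho},\tilde{v})\in C_{(\rho,v)}$ and $\tilde{v}\in C_v$ (bounded and locally Lipschitz in $x$), \citep[Lemma~8.1.4]{AGS_book_gradientflows} gives a unique solution $\tilde{X}$ with $\partial_t\tilde{X}(x,t)=\tilde{v}(\tilde{X}(x,t),t)$, $\tilde{X}(x,0)=x$, and crucially $(\tilde{X}_t)_\#\mu=\tilde{\rho}_t$ for all $t\in[0,1]$. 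So the pushforward identity — which is what makes the interaction and terminal costs align — comes for free from the cited ODE theory; I only need to note that it in fact holds for every $t$, not just a.e.\ $t$, by continuity of $t\mapsto\tilde{\rho}_t$ in $W_2$ and continuity of $t\mapsto(\tilde{X}_t)_\#\mu$.

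Next I would check the regularity needed for $\tilde{X}\in C_X=H^1(0,1;L^2(\mu))$. Using $\partial_t\tilde{X}(x,t)=\tilde{v}(\tilde{X}(x,t),t)$ and the change of variables with $\tilde{\rho}_t=(\tilde{X}_t)_\#\mu$,
\begin{equation}
\int_0^1\!\!\int\|\partial_t\tilde{X}(x,t)\|^2\dd\mu(x)\dd t
=\int_0^1\!\!\int\|\tilde{v}(\tilde{X}(x,t),t)\|^2\dd\mu(x)\dd t
=\int_0^1\!\!\int\|\tilde{v}(y,t)\|^2\dd\tilde{\rho}_t(y)\dd t,
\label{eq: cov for energy}
\end{equation}
which is finite because $(\tilde{\rho},\tilde{v})\in C_{(\rho,v)}$ forces $\tilde{v}\in L^2(\tilde{\rho}\otimes[0,1])$ (indeed $\tilde{\rho}\in C_\rho$ and $\tilde{v}_t\in L^2(\tilde{\rho}_t)$ with finite time integral of the norm, since $\tilde{\rho}$ is an absolutely continuous curve whose metric derivative is controlled by $\|\tilde{v}_t\|_{\tilde{\rho}_t}$). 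Together with $\tilde{X}(x,\cdot)$ being absolutely continuous for $\mu$-a.e.\ $x$ (it solves an ODE with locally bounded RHS, and $\tilde{v}$ bounded on the compact set swept out by the trajectories over $[0,1]$), this gives $\tilde{X}\in C_X$, and the pair $(\tilde{X},\tilde{v})\in C_{(X,v)}$ by the very construction.

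Finally I would compare the costs. For the dynamic term, \eqref{eq: cov for energy} shows $\mathbb{E}_{x\sim\mu}\int_0^1\frac12\|\partial_t\tilde{X}(x,t)\|^2\dd t=\int_0^1\int\frac12\|\tilde{v}(x,t)\|^2\dd\tilde{\rho}_t(x)\dd t$. For the interaction term, since $\tilde{\rho}_t=(\tilde{X}_t)_\#\mu$,
\begin{equation}
\mathbb{E}_{x\sim\mu}\int_0^1 F[\rho_t](\tilde{X}(x,t))\dd t
=\int_0^1\!\!\int F[\rho_t](y)\dd\tilde{\rho}_t(y)\dd t,
\end{equation}
and similarly $\mathbb{E}_{x\sim\mu}\,G[\rho_1](\tilde{X}(x,1))=\int G[\rho_1](y)\dd\tilde{\rho}_1(y)$. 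Summing the three identities gives $\calJ(\tilde{X};\rho)=\calJ(\tilde{\rho},\tilde{v};\rho)$ for any $\rho\in C_\rho$, as claimed; note this also shows the inequality in Lemma~\ref{lem: X to rho v} is tight on the range of the construction, consistent with $\tilde{v}$ there being the $L^2$-minimizing velocity.

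The main obstacle I anticipate is purely technical bookkeeping rather than conceptual: making sure the change-of-variables identities and the pushforward relation $\tilde{\rho}_t=(\tilde{X}_t)_\#\mu$ are justified for \emph{all} $t$ (needed for the terminal cost at $t=1$) and that the ODE solution is genuinely in $H^1(0,1;L^2(\mu))$ rather than merely a.e.-defined — this requires pinning down that the trajectories stay in a fixed compact set on $[0,1]$ (e.g.\ via a Grönwall bound using boundedness of $\tilde{v}$) so that the local Lipschitz and boundedness hypotheses of $C_v$ can be upgraded to global ones on the relevant set. Everything else is a direct application of the cited ODE existence/uniqueness result and Fubini.
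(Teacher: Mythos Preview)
Your proposal is correct and follows essentially the same route as the paper, which simply records the lemma as ``a direct corollary of \citep[Prop.~8.1.8]{AGS_book_gradientflows}'' without spelling out the change-of-variables for each cost term. Your version is more explicit (and cites the companion Lemma~8.1.4 rather than Prop.~8.1.8), but the substance---invoke the AGS flow-representation to get $\tilde X$ with $(\tilde X_t)_\#\mu=\tilde\rho_t$, then read off the cost equality via pushforward---is identical.
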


As a direct result of Lemmas~\ref{lem: X to rho v} and~\ref{lem: rho v to X}, the following theorem shows the relationship between Eulerian coordinates and the Lagrangian coordinates. 
Specifically, under certain regularity conditions,
 we show that for general MFGs, an Eulerian solution induces a Lagrangian solution;
For potential MFGs, the optimal value of the Eulerian and Lagrangian formulation coincides, and a Lagrangian solution also induces an Eulerian solution by flow matching.
The proof is in Appendix~\ref{apsec: proof thm euler=lagrange}.

\begin{theorem}
\label{thm: euler=lagrangian}

(i) Suppose $(\rho^*,v^*)$ is a solution to the MFG~\eqref{eq: mfg rho v} and $v^*\in C_v$, then
$v^*$ induces characteristics $X^*\in C_X$ that is a solution to~\eqref{eq: mfg X}.

(ii)
Suppose the potential MFG ~\eqref{eq: mfc rho v} admits a solution $(\rho^*,v^*)$ with $ v^*  \in C_v$. 
Then $v^*$ induces characteristics $X^*\in C_X$ that is a solution to~\eqref{eq: mfc X} and $\calJ(X^*)=\calJ(\rho^*,v^*)=: J^*$.
In addition, for any other $X\in C_X$ that is also a solution to~\eqref{eq: mfc X}, let $(\rho, v)$ be the flow-matched Eulerian representation of $X$ as in Lemma \ref{lem: X to rho v},
then $(\rho, v)$ is a solution to \eqref{eq: mfc rho v}, and in this case $J(X) = J(\rho, v) = J^*$.
\end{theorem}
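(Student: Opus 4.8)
The plan is to deduce Theorem~\ref{thm: euler=lagrangian} almost entirely from Lemmas~\ref{lem: X to rho v} and~\ref{lem: rho v to X}, treating each part as a two-sided inequality argument between the Eulerian infimum and the Lagrangian infimum. The core mechanism is: Lemma~\ref{lem: rho v to X} lets us push a regular Eulerian pair \emph{down} to a Lagrangian characteristic with \emph{equal} cost, while Lemma~\ref{lem: X to rho v} lets us push \emph{any} Lagrangian characteristic \emph{up} to an admissible Eulerian pair with cost that is no larger. Chaining these gives inequalities in both directions that pin down optimality.

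For part (i), I would argue as follows. Since $v^*\in C_v$ and $(\rho^*,v^*)\in C_{(\rho,v)}$, Lemma~\ref{lem: rho v to X} produces $X^*\in C_X$ with $(X^*,v^*)\in C_{(X,v)}$ and $\calJ(X^*;\rho)=\calJ(\rho^*,v^*;\rho)$ for every $\rho\in C_\rho$; in particular $(X^*_t)_\#\mu=\rho^*_t$, so the fixed-point/consistency requirement $\rho_t=(\hat X_t)_\#\mu$ holds with $\rho=\rho^*$. It remains to check that $X^*$ minimizes $\calJ(\cdot;\rho^*)$ over $C_X$. Take any $\tilde X\in C_X$; by Lemma~\ref{lem: X to rho v} there is $(\tilde\rho,\tilde v)\in C_{(\rho,v)}$ with $\calJ(\tilde\rho,\tilde v;\rho^*)\le\calJ(\tilde X;\rho^*)$. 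Because $(\rho^*,v^*)$ is a best response to $\rho^*$ (the defining property of an MFG solution \eqref{eq: mfg rho v}), $\calJ(\rho^*,v^*;\rho^*)\le\calJ(\tilde\rho,\tilde v;\rho^*)$. Combining, $\calJ(X^*;\rho^*)=\calJ(\rho^*,v^*;\rho^*)\le\calJ(\tilde\rho,\tilde v;\rho^*)\le\calJ(\tilde X;\rho^*)$, which is exactly the claim that $X^*$ solves \eqref{eq: mfg X}.

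For part (ii), the potential case, I would first establish $\calJ(X^*)=\calJ(\rho^*,v^*)=:J^*$ and that $X^*$ is Lagrangian-optimal. One direction: Lemma~\ref{lem: rho v to X}, applied in the potential form of the cost (the interaction and terminal terms depend only on the pushforward marginals $(X_t)_\#\mu=\rho^*_t$, so the equality $\calJ(X^*)=\calJ(\rho^*,v^*)$ follows; here I'd note the potential objective \eqref{eq: obj mfc X} is the specialization of \eqref{eq: obj mfg X}-type functionals and Lemma~\ref{lem: rho v to X}'s equality transfers termwise). For the reverse direction, for any $\tilde X\in C_X$, Lemma~\ref{lem: X to rho v} gives $(\tilde\rho,\tilde v)\in C_{(\rho,v)}$ with $\calJ(\tilde\rho,\tilde v)\le\calJ(\tilde X)$, and since $(\rho^*,v^*)$ minimizes \eqref{eq: mfc rho v}, $J^*=\calJ(\rho^*,v^*)\le\calJ(\tilde\rho,\tilde v)\le\calJ(\tilde X)$; combined with $\calJ(X^*)=J^*$ this shows $\inf_{C_X}\calJ=J^*$ and $X^*$ attains it. Finally, for an arbitrary Lagrangian minimizer $X$, let $(\rho,v)$ be its flow-matched representation (Lemma~\ref{lem: X to rho v}); then $\calJ(\rho,v)\le\calJ(X)=J^*\le\calJ(\rho,v)$, the last inequality because $(\rho,v)\in C_{(\rho,v)}$ and $J^*$ is the Eulerian infimum. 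Hence $\calJ(\rho,v)=J^*$, so $(\rho,v)$ solves \eqref{eq: mfc rho v}; and this is precisely a minimizer of the potential MFG, which by Proposition~\ref{prop:MFC--MFG} also solves the fixed-point problem.

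The main obstacle I anticipate is not the inequality-chaining skeleton, which is routine, but the bookkeeping around the potential-form objective: one must be careful that Lemma~\ref{lem: X to rho v}'s inequality $\calJ(\tilde\rho,\tilde v;\rho)\le\calJ(\tilde X;\rho)$ and Lemma~\ref{lem: rho v to X}'s equality are stated for the $\rho$-parametrized cost \eqref{eq: obj mfg X}, whereas \eqref{eq: mfc X} uses the self-referential potential cost \eqref{eq: obj mfc X}. The resolution is that, along a fixed $X$ (or fixed $(\rho,v)$), the interaction and terminal potential terms $\int_0^1\calF((X_t)_\#\mu)\,\dd t+\calG((X_1)_\#\mu)$ depend only on the marginals, which flow matching preserves exactly ($\tilde\rho_t=(\tilde X_t)_\#\mu$); so only the dynamic (kinetic) term changes, and Lemma~\ref{lem: X to rho v} shows it can only decrease, while Lemma~\ref{lem: rho v to X}'s construction keeps it equal. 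I would state this marginal-preservation observation explicitly as the bridge, then the rest is the inequality sandwich above. A secondary minor point is invoking Proposition~\ref{prop:MFC--MFG} at the end to phrase the Eulerian minimizer as an MFG solution, which requires no convexity for the ``minimizer $\Rightarrow$ fixed point'' direction and so applies directly.
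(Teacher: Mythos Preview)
Your proposal is correct and follows essentially the same approach as the paper: use Lemma~\ref{lem: rho v to X} to pass from the regular Eulerian solution to a Lagrangian characteristic with equal cost, use Lemma~\ref{lem: X to rho v} to pass from any Lagrangian competitor back to an Eulerian pair with no larger cost, and chain the inequalities. Your explicit remark about marginal preservation bridging the $\rho$-parametrized cost to the potential cost is a point the paper leaves implicit, and the final appeal to Proposition~\ref{prop:MFC--MFG} is unnecessary since the claim in (ii) only asserts that $(\rho,v)$ solves~\eqref{eq: mfc rho v}, not the fixed-point problem.
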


For control problems, the equivalence between the Lagrangian and Eulerian formulations has been previously studied in~\citep{cavagnari2022LEK} by considering distributions on path space, and their Lagrangian formulation is in a weak sense. 
In contrast, our framework is motivated by neural network methodology, and we focus our theory on classical ODE flows where trajectories from $\mu$-a.e.\,$x$ are well-defined, which is possible when assuming the MFG solution has a regular $v^*$. 

\subsection{Convergence rate of the proximal fixed-point scheme}
\label{subsec:cvg}

In our scheme, the particle update~\eqref{eq: update X} reduces the total cost;
The flow-matching update~\eqref{eq: update v} and resampling~\eqref{eq: update X rho} regularize the trajectories to reduce the dynamic cost while keeping the interaction and terminal costs unchanged.  
As a result, the three steps together yield a decay in the objective value, as stated in the following two lemmas. 
Proofs are provided in Appendix~\ref{apsec: proof lem descent by X update} and~\ref{apsec: proof lem descent by v update}.
\begin{lemma}
\label{lem: descent by X update}
    Assume that  $F[\rho^{(\ell)}_t],G[\rho^{(\ell)}_1]$ are proper and $L$-smooth in $\bbR^d$ for any $t\in[0,1]$.
    Then for any $0<\alpha_{\ell}<\frac{1}{L}$, the update scheme~\eqref{eq: update X} admits a unique solution $X^{(\ell+1/2)}\in C_X$ and 
    \begin{equation}
        \calJ(X^{(\ell+1/2)};\rho^{(\ell)})\leq \calJ(X^{(\ell)};\rho^{(\ell)})-\frac{1}{2\alpha_{\ell}}\left(\|X^{(\ell+1/2)}-X^{(\ell)}\|_{\mu\otimes [0,1]}^2 + \|X_1^{(\ell+1/2)}-X_1^{(\ell)}\|_{\mu}^2\right).
    \label{eq:decay}
    \end{equation}
\end{lemma}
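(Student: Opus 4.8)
\textbf{Proof proposal for Lemma~\ref{lem: descent by X update}.}

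The plan is to treat \eqref{eq: update X} as a proximal-point step for a smooth (but nonconvex) functional on the Hilbert space $C_X = H^1(0,1;L^2(\mu))$, and to derive existence, uniqueness, and the descent inequality from strong convexity of the proximal objective. First I would write the proximal objective as
\begin{equation*}
\Phi_\ell(X) := \calJ(X;\rho^{(\ell)}) + \frac{1}{2\alpha_\ell}\left(\|X-X^{(\ell)}\|_{\mu\otimes[0,1]}^2 + \|X_1-X_1^{(\ell)}\|_{\mu}^2\right),
\end{equation*}
and split $\calJ(X;\rho^{(\ell)})$ from \eqref{eq: obj mfg X} into the dynamic term $\calD(X) := \frac12 \bbE_{x\sim\mu}\int_0^1 \|\partial_t X(x,t)\|^2\dd t$, which is convex and lower semicontinuous in $X$ on $C_X$, and the cost terms $\calC(X) := \bbE_{x\sim\mu}[\int_0^1 F[\rho^{(\ell)}_t](X(x,t))\dd t + G[\rho^{(\ell)}_1](X(x,1))]$, which are not convex in $X$ but, by the $L$-smoothness assumption on $F[\rho^{(\ell)}_t]$ and $G[\rho^{(\ell)}_1]$, are $C^1$ in $X$ with $L$-Lipschitz Fréchet gradient with respect to the relevant $L^2$ norms. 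Consequently $\calC$ is $(-L)$-weakly convex pointwise in each of the two quadratic pieces, so $\calC(X) + \frac{1}{2\alpha_\ell}(\|X-X^{(\ell)}\|_{\mu\otimes[0,1]}^2 + \|X_1-X_1^{(\ell)}\|_\mu^2)$ is $(\tfrac{1}{\alpha_\ell}-L)$-strongly convex whenever $\alpha_\ell < 1/L$. Adding the convex term $\calD$, I get that $\Phi_\ell$ is strongly convex, coercive, and lower semicontinuous on $C_X$, hence admits a unique minimizer $X^{(\ell+1/2)} \in C_X$ by the direct method in the calculus of variations. I should also note that $X^{(\ell+1/2)}(x,0)=x$ is preserved because the penalty does not touch $t=0$ and $X^{(\ell)}$ already satisfies it, or it can be built into the constraint set.

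Next I would establish the descent inequality \eqref{eq:decay}. The clean route is to use the defining minimality together with the strong convexity modulus: since $X^{(\ell+1/2)}$ minimizes $\Phi_\ell$ and $X^{(\ell)}$ is a competitor, and since $\Phi_\ell - \calD$ is $(\tfrac{1}{\alpha_\ell}-L)$-strongly convex while $\calD$ is convex, one has the second-order-type inequality $\Phi_\ell(X^{(\ell)}) \geq \Phi_\ell(X^{(\ell+1/2)}) + \tfrac12(\tfrac{1}{\alpha_\ell}-L)\,d(X^{(\ell+1/2)},X^{(\ell)})^2$ where $d(\cdot,\cdot)^2$ is the combined squared distance appearing in the penalty — this follows from expanding at the minimizer and using $\langle \nabla\Phi_\ell(X^{(\ell+1/2)}), X^{(\ell)} - X^{(\ell+1/2)}\rangle \geq 0$. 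Actually, the cleaner and sharper statement I want is obtained by a direct completion-of-squares / telescoping argument: subtract the penalty evaluated at $X^{(\ell)}$ (which is zero) and at $X^{(\ell+1/2)}$, and use that $\calC$ being $L$-smooth gives the standard descent lemma bound $\calC(X^{(\ell+1/2)}) \le \calC(X^{(\ell)}) + \langle\nabla\calC(X^{(\ell)}), X^{(\ell+1/2)}-X^{(\ell)}\rangle + \tfrac{L}{2} d(X^{(\ell+1/2)},X^{(\ell)})^2$, combined with the first-order optimality condition for $\Phi_\ell$ and convexity of $\calD$. Rearranging these yields exactly \eqref{eq:decay} with the $\tfrac{1}{2\alpha_\ell}$ constant, because the $L$-term gets absorbed once $\alpha_\ell < 1/L$ (in fact one can afford the stronger constant $\tfrac{1}{2\alpha_\ell}(1-L\alpha_\ell)$ but the stated bound $\tfrac{1}{2\alpha_\ell}$ needs only $\alpha_\ell<1/L$, and re-deriving it from $\calJ(X^{(\ell+1/2)};\rho^{(\ell)}) + \text{penalty} \le \calJ(X^{(\ell)};\rho^{(\ell)})$, i.e.\ the raw minimality inequality, already gives \eqref{eq:decay} verbatim without even invoking smoothness). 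I would in fact present the descent inequality as the immediate consequence of $\Phi_\ell(X^{(\ell+1/2)}) \le \Phi_\ell(X^{(\ell)}) = \calJ(X^{(\ell)};\rho^{(\ell)})$, and reserve the $L$-smoothness only for existence/uniqueness.

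The main obstacle I anticipate is the functional-analytic well-posedness of the minimization on $C_X=H^1(0,1;L^2(\mu))$: I need to verify that $\calD$ is weakly lower semicontinuous (standard, since it is a norm-squared of $\partial_t X$), that the sublevel sets of $\Phi_\ell$ are weakly compact (coercivity from the penalty plus $\calD$ controls the $H^1$ norm, given that $F,G$ proper and bounded below — here ``proper'' must be used to rule out $-\infty$, and one likely needs a mild lower bound such as $F[\rho^{(\ell)}_t], G[\rho^{(\ell)}_1] \ge -c(1+\|\cdot\|^2)$ compatible with $L$-smoothness, or the statement should be read with that understanding), and that the cost terms $\calC$ are weakly continuous along minimizing sequences — this uses that $H^1(0,1;L^2(\mu))$ embeds compactly into $C([0,1];L^2(\mu))$ so that $X_n \to X$ weakly in $C_X$ implies $X_n(\cdot,t)\to X(\cdot,t)$ strongly in $L^2(\mu)$ for each $t$, whence the $L$-Lipschitz-gradient (hence continuous) integrands pass to the limit by dominated convergence. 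Uniqueness then follows from strict/strong convexity of $\Phi_\ell$ on the affine subspace $\{X(x,0)=x\}$. Once these points are in place, the descent inequality is essentially free from the minimality of $X^{(\ell+1/2)}$, so the bulk of the actual work — and the part I would write most carefully — is the compactness and lower-semicontinuity argument for existence.
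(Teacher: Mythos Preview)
Your proposal is correct and follows essentially the same route as the paper: strong convexity of the proximal objective (from $L$-smoothness of $F,G$ combined with the $\tfrac{1}{2\alpha_\ell}$ penalty when $\alpha_\ell<1/L$, plus convexity of the kinetic term) gives uniqueness, existence is obtained by the direct method via boundedness of minimizing sequences in $H^1(0,1;L^2(\mu))$, weak lower semicontinuity of $\|\partial_t X\|^2$, and the compact embedding $H^1\hookrightarrow C([0,1];L^2(\mu))$ to pass the $F,G$ terms to the limit. The descent inequality~\eqref{eq:decay} is, as you correctly identify at the end, nothing more than the raw minimality $\Phi_\ell(X^{(\ell+1/2)})\le\Phi_\ell(X^{(\ell)})=\calJ(X^{(\ell)};\rho^{(\ell)})$, which is exactly how the paper states it; your earlier detour through the descent lemma and strong-convexity modulus is unnecessary.
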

\begin{lemma}
\label{lem: descent by v update}
    If $X^{(\ell+1/2)}\in C_X$, set $\rho^{(\ell+1)}_t:=(X^{(\ell+1/2)}_t)_{\#}\mu$.
    Then, $\rho^{(\ell+1)}\in C_{\rho}$ and there exists $v^{(\ell+1)}$ solves~\eqref{eq: update v} and $v_t^{(\ell+1)}$ is unique upto a $\rho_t$-zero measure set.
    In addition, if $v^{(\ell+1)}\in C_v$, then there exist $X^{(\ell+1)}\in C_X$ such that $(X^{(\ell+1)},v^{(\ell+1)})\in C_{(X,v)}$, and for all such $X^{(\ell+1)}$, $\calJ(X^{(\ell+1)};\rho)\leq \calJ(X^{(\ell+1/2)};\rho)$.
\end{lemma}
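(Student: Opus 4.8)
The plan is to establish Lemma~\ref{lem: descent by v update} by invoking the two previously proven structural lemmas, Lemma~\ref{lem: X to rho v} and Lemma~\ref{lem: rho v to X}, together with the ODE existence/uniqueness facts recalled in the Preliminaries. The statement has three parts: (a) $\rho^{(\ell+1)} \in C_\rho$; (b) existence and essential uniqueness of $v^{(\ell+1)}$ solving the flow-matching regression~\eqref{eq: update v}; (c) the descent inequality $\calJ(X^{(\ell+1)};\rho)\le\calJ(X^{(\ell+1/2)};\rho)$ whenever $v^{(\ell+1)}\in C_v$. Parts (a) and (b) are essentially a restatement of what Lemma~\ref{lem: X to rho v} already delivers, applied with $\tilde X = X^{(\ell+1/2)}$: since $X^{(\ell+1/2)}\in C_X$ by hypothesis, Lemma~\ref{lem: X to rho v} gives $\tilde\rho := ((X^{(\ell+1/2)}_t)_\#\mu)_t \in C_\rho$ and a pair $(\tilde\rho,\tilde v)\in C_{(\rho,v)}$, so setting $\rho^{(\ell+1)}=\tilde\rho$ and $v^{(\ell+1)}=\tilde v$ settles (a) and the existence half of (b).

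For the uniqueness half of (b), I would argue that at each fixed $t$ the minimization $\min_v \int \|v(X^{(\ell+1/2)}(x,t),t)-\partial_t X^{(\ell+1/2)}(x,t)\|^2\dd\mu(x)$ is a projection in $L^2(\rho_t^{(\ell+1)})$: writing $y = X^{(\ell+1/2)}(x,t)$ and pushing forward, the integral equals $\int \mathbb{E}[\,\|v(y,t) - \partial_t X^{(\ell+1/2)}(x,t)\|^2 \mid X^{(\ell+1/2)}(x,t)=y\,]\,\dd\rho_t^{(\ell+1)}(y)$, which is minimized pointwise in $y$ by the conditional expectation $v^{(\ell+1)}(y,t) = \mathbb{E}[\partial_t X^{(\ell+1/2)}(x,t)\mid X^{(\ell+1/2)}(x,t)=y]$. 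This conditional expectation is unique up to a $\rho_t^{(\ell+1)}$-null set, which is exactly the claimed essential uniqueness; the strict convexity of the squared-norm objective in $L^2(\rho_t^{(\ell+1)})$ is the underlying mechanism. (This is presumably the argument already used to prove Lemma~\ref{lem: X to rho v}, so I would either cite it or reproduce the one-line projection statement.)

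For part (c), assume additionally $v^{(\ell+1)}\in C_v$. Since $(\rho^{(\ell+1)},v^{(\ell+1)})\in C_{(\rho,v)}$ from part (a)–(b), Lemma~\ref{lem: rho v to X} applies: there exists $X^{(\ell+1)}\in C_X$ with $(X^{(\ell+1)},v^{(\ell+1)})\in C_{(X,v)}$ and $\calJ(X^{(\ell+1)};\rho)=\calJ(\rho^{(\ell+1)},v^{(\ell+1)};\rho)$ for any $\rho\in C_\rho$. Moreover the ODE existence/uniqueness statement recalled after~\eqref{eq: C_X,v} (via \citep[Lemma 8.1.4]{AGS_book_gradientflows}) guarantees this $X^{(\ell+1)}$ is in fact the unique such characteristic map, so ``for all such $X^{(\ell+1)}$'' is automatic, but even without uniqueness Lemma~\ref{lem: rho v to X} gives the equality for every admissible $X^{(\ell+1)}$. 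Chaining with the inequality $\calJ(\rho^{(\ell+1)},v^{(\ell+1)};\rho)\le\calJ(X^{(\ell+1/2)};\rho)$ from Lemma~\ref{lem: X to rho v} yields $\calJ(X^{(\ell+1)};\rho)\le\calJ(X^{(\ell+1/2)};\rho)$, which is (c).

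The proof is therefore short modulo the two structural lemmas; the only genuine content beyond bookkeeping is verifying the projection/conditional-expectation characterization for essential uniqueness and checking that the hypotheses of Lemma~\ref{lem: rho v to X} (namely $(\rho^{(\ell+1)},v^{(\ell+1)})\in C_{(\rho,v)}$ and $v^{(\ell+1)}\in C_v$) are met — the first coming free from Lemma~\ref{lem: X to rho v} and the second being an assumed regularity hypothesis. The main obstacle, if any, is making sure the measurable-selection/conditional-expectation argument is stated cleanly in the time-dependent setting (i.e., that the pointwise-in-$t$ minimizers can be assembled into a jointly measurable $v^{(\ell+1)}$ lying in $L^2(\rho^{(\ell+1)}\otimes[0,1])$), but this is already handled inside Lemma~\ref{lem: X to rho v}, so here it suffices to cite it.
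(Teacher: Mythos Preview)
Your proposal is correct and follows essentially the same approach as the paper: the paper's proof simply cites Lemma~\ref{lem: X to rho v} for $\rho^{(\ell+1)}\in C_\rho$ and the essential uniqueness of $v^{(\ell+1)}$, then chains $\calJ(X^{(\ell+1)};\rho)=\calJ(\rho^{(\ell+1)},v^{(\ell+1)};\rho)\le\calJ(X^{(\ell+1/2)};\rho)$ via Lemma~\ref{lem: rho v to X} and Lemma~\ref{lem: X to rho v} respectively. Your conditional-expectation justification for uniqueness is a slightly more explicit (probabilistic) restatement of the Radon--Nikodym/Riesz argument embedded in the proof of Lemma~\ref{lem: X to rho v}, but the content is the same.
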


For general MFGs, establishing how the descent property leads to convergence to a fixed point is nontrivial and remains an open question for future investigation.  
In this paper, we show that the proposed scheme achieves a sublinear convergence rate for optimal control problems 
(that is, where $F,G$ are independent of $\rho$) 
and a linear convergence rate when $F,G$ are additionally strongly convex on $\bbR^d$.
The result is stated below,
and the proof is in Appendix~\ref{apsec: proof thm cvg rate}.
\begin{theorem}
\label{thm: cvg rate}
Let the sequence $X^{(\ell)},v^{(\ell)}$ generated by update scheme~\eqref{eq: update X rho}\eqref{eq: update X}\eqref{eq: update v} with $0<\alpha_{\ell}=\alpha<\frac{1}{L}$,
and assume $v^{(\ell)}\in C_v$ for all $\ell$.
If $F,G$ are independent of $\rho$ and are proper and $L$-smooth functions in $\bbR^d$, and $\calJ(X)\geq \underline{\calJ}$ for any $X\in C_X$, then $X^{(\ell)}$ satisfies
    \begin{equation}
        \min_{\ell\leq K}\left\{\|X^{(\ell+1/2)}-X^{(\ell)}\|^2_{\mu\otimes [0,1]} + \|X_1^{(\ell+1/2)}-X_1^{(\ell)}\|^2_{\mu}\right\}\leq\frac{2\alpha(\calJ(X^{(0)})-\underline{\calJ})}{K},
    \label{eq:sublinear}
    \end{equation}
    If in addition, $F$ and $G$ are $\lambda$-convex $(\lambda> 0)$, and there exist a solution $(X^*,\rho^*)$ to~\eqref{eq: mfg X}, then the sequence $X^{(\ell)}$ generated by update scheme~\eqref{eq: update X} and $X^{(\ell)}=X^{(\ell+1/2)}$ with $0<\alpha_{\ell}=\alpha<\frac{1}{L}$ satisfies
    \begin{equation}
        \|X^{(\ell)}-X^*\|^2_{\mu\otimes [0,1]} + \|X^{(\ell)}_1 - X^*_1\|^2_{\mu}\leq\frac{1}{(1+2\lambda\alpha)^{\ell}}\left( \|X^{(0)}-X^*\|^2_{\mu\otimes [0,1]} + \|X^{(0)}_1 - X^*_1\|^2_{\mu}\right).
    \label{eq:linear}
    \end{equation}
\end{theorem}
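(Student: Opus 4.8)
The plan is to exploit the fact that in the optimal control setting the objective $\calJ(\cdot;\rho)$ does not actually depend on $\rho$, so we may write $\calJ(X) := \calJ(X;\rho)$ unambiguously and treat the scheme as a proximal-point method for a fixed functional on the Hilbert space $C_X = H^1(0,1;L^2(\mu))$ equipped with the norm $\|X\|^2_{\mu\otimes[0,1]} + \|X_1\|_\mu^2$. For the sublinear rate, I would simply telescope the descent inequality from Lemma~\ref{lem: descent by X update}. Combining it with Lemma~\ref{lem: descent by v update} (which gives $\calJ(X^{(\ell+1)}) \le \calJ(X^{(\ell+1/2)})$ since $F,G$ are $\rho$-independent so the hypothesis ``for any $\rho$'' applies), we get
\begin{equation}
\calJ(X^{(\ell+1)}) \le \calJ(X^{(\ell)}) - \frac{1}{2\alpha}\left(\|X^{(\ell+1/2)}-X^{(\ell)}\|^2_{\mu\otimes[0,1]} + \|X_1^{(\ell+1/2)}-X_1^{(\ell)}\|^2_\mu\right).
\end{equation}
Summing over $\ell = 0,\dots,K-1$ and using $\calJ \ge \underline{\calJ}$ yields $\sum_{\ell<K}(\cdots) \le 2\alpha(\calJ(X^{(0)})-\underline{\calJ})$, and bounding the sum below by $K$ times its minimum term gives~\eqref{eq:sublinear}. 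One subtlety: we need $X^{(0)}$ to actually be of the form required so that $\calJ(X^{(0)}) = \calJ(X^{(0)};\rho^{(0)})$ is the quantity appearing; since in the OC case the two coincide this is immediate.

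For the linear rate under $\lambda$-convexity, I would recognize that with $F,G$ both $\lambda$-convex on $\bbR^d$, the functional $\calJ$ is $\lambda$-strongly convex on $C_X$ in the stated norm: the dynamic term $\bbE_x \int_0^1 \tfrac12\|\partial_t X\|^2\dd t$ is convex (it is a quadratic form), and $\bbE_x\int_0^1 F(X_t)\dd t + \bbE_x\, G(X_1)$ inherits $\lambda$-convexity pointwise, which integrates to $\lambda$-strong convexity of the sum against exactly the norm $\|\cdot\|^2_{\mu\otimes[0,1]} + \|\cdot_1\|_\mu^2$ (the extra $\|X_1\|_\mu^2$ term in the proximal penalty is matched by the terminal-cost contribution). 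Then the update~\eqref{eq: update X} with $X^{(\ell)}=X^{(\ell+1/2)}$ is exactly the proximal-point iteration for a $\lambda$-strongly convex functional with step $\alpha$, for which the standard contraction estimate gives, for the unique minimizer $X^*$,
\begin{equation}
\|X^{(\ell+1)}-X^*\|^2 \le \frac{1}{1+2\lambda\alpha}\|X^{(\ell)}-X^*\|^2,
\end{equation}
where $\|\cdot\|$ is the composite norm; iterating gives~\eqref{eq:linear}. The proof of this contraction is the usual one: use the optimality condition $0 \in \partial\calJ(X^{(\ell+1)}) + \tfrac1\alpha(X^{(\ell+1)}-X^{(\ell)})$ together with the $\lambda$-strong monotonicity of $\partial\calJ$ and $0 \in \partial\calJ(X^*)$, then Cauchy–Schwarz.

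The main obstacle I anticipate is not the telescoping or the abstract proximal-point argument, but \emph{justifying that the convexity/smoothness hypotheses on $F,G$ as functions on $\bbR^d$ transfer cleanly to the functional $\calJ$ on $C_X$ in the correct norm}, and in particular handling the terminal term: one must check that the $\|X_1 - X_1^{(\ell)}\|_\mu^2$ piece of the proximal penalty combines correctly with the $\lambda$-convexity of $G$ so that the effective strong-convexity modulus against the composite norm is exactly $\lambda$ (not $\lambda$ against one norm and something else against the other). I would also need to confirm well-posedness — that~\eqref{eq: update X} has a unique minimizer in $C_X$, which Lemma~\ref{lem: descent by X update} already supplies for $\alpha < 1/L$ — and that differentiating $\calJ$ along curves in $H^1(0,1;L^2(\mu))$ is legitimate, i.e.\ that $\partial\calJ$ is a well-defined maximal monotone operator; this uses $L$-smoothness of $F,G$ plus the quadratic structure of the dynamic cost. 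A secondary point is that the second part of the theorem drops the flow-matching step entirely ($X^{(\ell)}=X^{(\ell+1/2)}$), so I must make sure the strong-convexity argument does not secretly rely on the resampling step — it does not, since once we are on the convex functional $\calJ$ directly, pure proximal point suffices.
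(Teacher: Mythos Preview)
Your proposal is correct and follows essentially the same route as the paper: for the sublinear part the paper also combines Lemmas~\ref{lem: descent by X update} and~\ref{lem: descent by v update} and telescopes, and for the linear part it writes out precisely the $\lambda$-strong monotonicity inequality you anticipate (the dynamic term contributes $\|\partial_t(X-Y)\|_{\mu\otimes[0,1]}^2\ge 0$, while the $F$ and $G$ terms give $\lambda$ against the interior and terminal norms respectively, matching the composite proximal metric exactly as you worried about). The one small correction is that the final step is not Cauchy--Schwarz but the polarization identity $\langle a-b,a-c\rangle=\tfrac12(\|a-b\|^2+\|a-c\|^2-\|b-c\|^2)$, which is what yields the exact contraction factor $1/(1+2\lambda\alpha)$; Cauchy--Schwarz would give a slightly different (though comparable) rate.
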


\section{Algorithm }

We provide the implementation details of the method proposed in Section \ref{sec:method}.
In practice, we parametrize the velocity field $v$ by a neural network $v_{\theta}$.
Since $X(x,\cdot)$ is the trajectory starting at $x$, we represent $X$ with particle trajectories at discrete time stamps.
More specifically, let $\Delta t = \frac{1}{m}$ and the time steps $t_j = j\Delta t$ for $j = 1, \ldots, m$. 
We sample $\{x_i\}_{i=1}^n$ i.i.d. from $\mu$
and the trajectory of the $i$-th particle $x_i$ is $\{X_{i,t_j},j\leq m\}$ where $X_{i,t_j}\approx X(x_i, t_j)$. Then the collection of trajectories $\{X_{i,t_j}\}_{i,j}$ approximates $X$ and the empirical distribution of $\{X_{i,t_j}\}_i$ approximates $\rho_{t_j}$.
With the finite difference and sampling approximation, we detail the implementation as follows.

\RestyleAlgo{ruled}
\SetKwComment{Comment}{/* }{ */}
\SetKwInput{KwPar}{Parameter}
\SetKwInput{KwIni}{Initialization}
\SetKw{KwBreak}{Break}

\begin{algorithm}[t]
\caption{Particle-based Flow-Matching for Mean-Field Games}
\label{alg: gradient descent}
  \KwIn{initial distribution $\mu$, interaction cost $F$, terminal cost $G$ and initial velocity $v_{\theta}$\;}
  \KwPar{total outer loop $K$\; 
  \quad for particle update: particle batch size $n_1$, number of particle updates $L_1$\; 
  \quad for flow matching: particle batch size $n_2$,
  number of velocity network updates $L_2$\;}
  \For{$k=1,2,\cdots,K$}{

    Resample: 
    sample $\{X_{i,t_0}\}_{i=1}^n\sim_{i.i.d}\mu$, and compute trajectories $\{X_{i,t_j}\}_{j=1}^m$ via $v_{\theta}$ for $i=1,\cdots,n$\;

    \For{$l=1,2,\cdots,L$}{
  
    Cost estimation: 
    compute costs $F[\rho_{t_j}], G[\rho_{t_m}]$ and their (sub)gradients (see Exs.~\ref{eg:ot-kl cost},~\ref{eg:nonpot cost})\;

    Particle update: 
    update $\{X_{i,t_j,},1\leq j\leq m\}_{i}$ by~\eqref{eq:gradient descent} with batch size $n_1$ for $L_1$ steps\;
    }
    Flow matching: 
    update parameters $\theta$ of the velocity network $v_{\theta}$ to match empirical trajectories of $\{X_{i,t_j},j\leq m\}_{i=1}^n$ with batch size $n_2$ for $L_2$ steps of Adam with the training loss~\eqref{eq:flow matching}\;
  }
  \KwOut{parameterized velocity field $v_{\theta}$.}
\end{algorithm}

\begin{enumerate}
    \item Resample and cost estimation.
    
    While obtaining $X^{(\ell)}$ by resampling and solving ODE is straightforward, computing $\rho^{(\ell)}$ and therefore the cost can be costly.
    In practice, we use empirical samples $\{X_{i,t_j}\}_i$ of $\rho_{t_j}$ to estimate the coupling costs $F[\rho_{t_j}]$, $G[\rho_{t_m}]: \bbR^d \to \bbR$ and their (sub)gradients. This procedure depends on the specific form of the coupling. We provide two representative cases, which will be numerically studied in Section \ref{sec:numerical}:
    
    \begin{example}[Dynamic OT with relaxed terminal cost]
        \label{eg:ot-kl cost}
        Consider the OT problem with terminal constraint relaxed to KL divergence, i.e. $F=0, G[\rho]=\log\frac{\dd\rho}{\dd\nu}$.
        In practice, samples from both $\rho$ and $\nu$ are available. Following~\citep{xu2025qflow}, we approximate $G[\rho_{t_m}]$ by training a neural network classifier to distinguish between samples from $\rho_{t_m}$ and $\nu$. The (sub)gradient of $G[\rho_{t_m}]$ at a point $x$ can then be computed via auto-differentiation.
        When the cost is updated frequently, the particle distribution changes gradually, so the classifier can be trained for fewer steps at each update. 
    \end{example}

    \begin{example}[Smoothing coupling]
        \label{eg:nonpot cost}
        Consider nonlocal coupling $F[\rho](x) = \int K(x,y) \dd\rho(y)$ where the kernel function $K(x,y)$ is differentiable in $x$, and then $\nabla F[\rho](x) = \int \nabla_x K(x,y) \dd\rho(y)$. Both $F[\rho]$ and $\nabla F[\rho]$ can be approximated via empirical averages over particles.
        Consider terminal cost $G[\rho](x)=(a^\top x-c)^2$ independent of $\rho$, then $\nabla G[\rho]$ has a closed-form.
    \end{example}

    \item Particle update.

    Given $\rho$ and particle trajectories $X$, assume $F[\rho_t]$ and $G[\rho_1]$ are differentiable and that $X$ is sufficiently smooth. Then, for any smooth perturbation $Y: \bbR^d \times [0, T] \to \bbR^d$ with $Y(x,0) = 0$, a first-order expansion gives
    \begin{align*}
        \calJ(X+\epsilon Y;\rho) 
        &=\calJ(X;\rho)
        +\epsilon \left( \int_0^1 \left \langle Y_t,-(\partial_{tt}X)_t+\nabla F[\rho_t](X_t)\right \rangle_{\mu}\dd t
         + \left\langle Y_1,(\partial_tX)_1+\nabla G[\rho_1](X_1) \right\rangle_{\mu} \right) \\
        &~~~ + o(\epsilon),
    \end{align*}
    where the first-order term in $\epsilon$ provides the descent direction. 
    In practice, we update particle trajectories $\{X_{i,t_j},1\leq j\leq m\}_i$ for $L_1$ steps with batch size $n_1$ by:
    \begin{equation}
    X_{i,t_j}\leftarrow
    \left\{\begin{aligned}
        &X_{i,t_j} - \beta\Delta t\left( -(D_{tt}X_i)_{t_j}+\nabla F[\rho_{t_j}](X_{i,t_j}) \right),\quad j=1,2,\cdots,m-1,\\
        &X_{i,t_m} - \beta\left( (D_tX_i)_{t_m}+\nabla G[\rho_{t_m}](X_{i,t_m}) \right),\quad j=m.
    \end{aligned}\right.
    \label{eq:gradient descent}
    \end{equation}
    Here $\leftarrow$ indicates that $X_{i,t_j}$ is updated by the right-hand side expression, $\beta$ is the step size, and $D_{tt}$ and $D_t$ are finite-difference approximations of $\partial_{tt}$ and $\partial_t$, respectively.

    \item Flow matching update.
    
    The flow matching update trains the velocity network $v_{\theta}$ to match the empirical velocities of particle trajectories $\{X_{i,t_j},\, j \leq m\}_{i=1}^n$, which is closely related to flow matching methods~\citep{albergo2023stointerp,lipman2023flowmatching,liu2022rectifiedflow}. The training loss is
    \begin{equation}
        \min_{\theta} \, \frac{\Delta t}{n} \sum_{i=1}^n \sum_{j=1}^{m} 
        \left\| v_{\theta}(X_{i,t_{j-1}}, t_{j-1}) - \frac{1}{\Delta t}
        (X_{i,t_j} - X_{i,t_{j-1}}) \right\|^2.
        \label{eq:flow matching}
    \end{equation}
    We optimize $\theta$ for $L_2$ steps using Adam, with a batch of $n_2$ trajectories per step.
    Since the particle update does not require access to the neural network $v_{\theta}$, and flow matching reduces the cost only when trajectories $X$ intersect, we apply the flow matching update less frequently.
\end{enumerate}

To summarize, at the $\ell$-th epoch, we first sample $n$ particles and compute their trajectories using $v_{\theta}$, then run the particle update for $L$ iterations. Afterward, we update the neural network by matching the computed trajectories. The full procedure is in Algorithm~\ref{alg: gradient descent}.

The main computational costs come from the particle update (line 5) and the flow matching update (line 7). The cost of the particle update scales as $O(n_1 L_1)$, and that of the flow matching update scales as $O(n_2 L_2)$, both proportional to the batch size and number of update steps.
Importantly, the flow matching update is performed infrequently. In addition, compared to simulation-based methods, our approach offers a simpler and more efficient training process: the velocity network is updated via a decoupled, simulation-free flow matching step, which significantly reduces overall computational cost.

\section{Experiments}
\label{sec:numerical}

We apply the proposed method to two types of MFGs:
Section \ref{subsec:num toy} and \ref{subsec:num image} compute the OT interpolation between two distributions via an MFG formulation,
and Section \ref{subsec:num non-pot} considers a non-potential MFG. 
Details of the experimental setup, including hyperparameters and neural network structures, are provided in Appendix~\ref{appendix:experiment}.
Code can be found at \url{https://github.com/Jayaos/mfg_flow_matching}.

\begin{figure}[t]
    \centering
    \includegraphics[width=\textwidth]{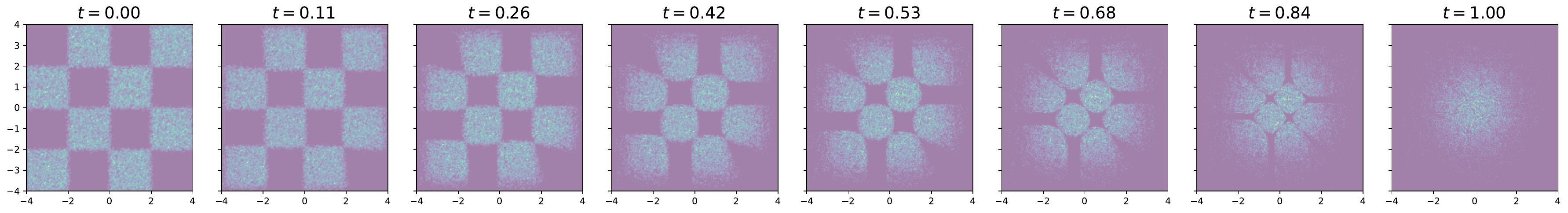}
    \caption{Intermediate distributions between $4 \times 4$ checkerboard and an isotropic Gaussian solved by the proposed algorithm.}
\label{fig:checkerboard-to-gaussian}
\end{figure}

\subsection{2D toy example}
\label{subsec:num toy}

We are to learn the dynamic OT in $\mathbb{R}^2$ between a $4 \times 4$ checkerboard distribution and an isotropic Gaussian target,
and we adopt the KL-divergence relaxation of the terminal constraints as in Example \ref{eg:ot-kl cost}. 
Figure~\ref{fig:checkerboard-to-gaussian} visualizes the evolution of the distribution, where lighter colors indicate more particles and higher density. 
The learned transport map continuously interpolates between the checkerboard source density and the target Gaussian density.

\subsection{Non-potential MFG}
\label{subsec:num non-pot}

\begin{figure}
    \centering
    \begin{subfigure}[t]{0.45\textwidth}
        \centering
        \includegraphics[height=4.2cm]{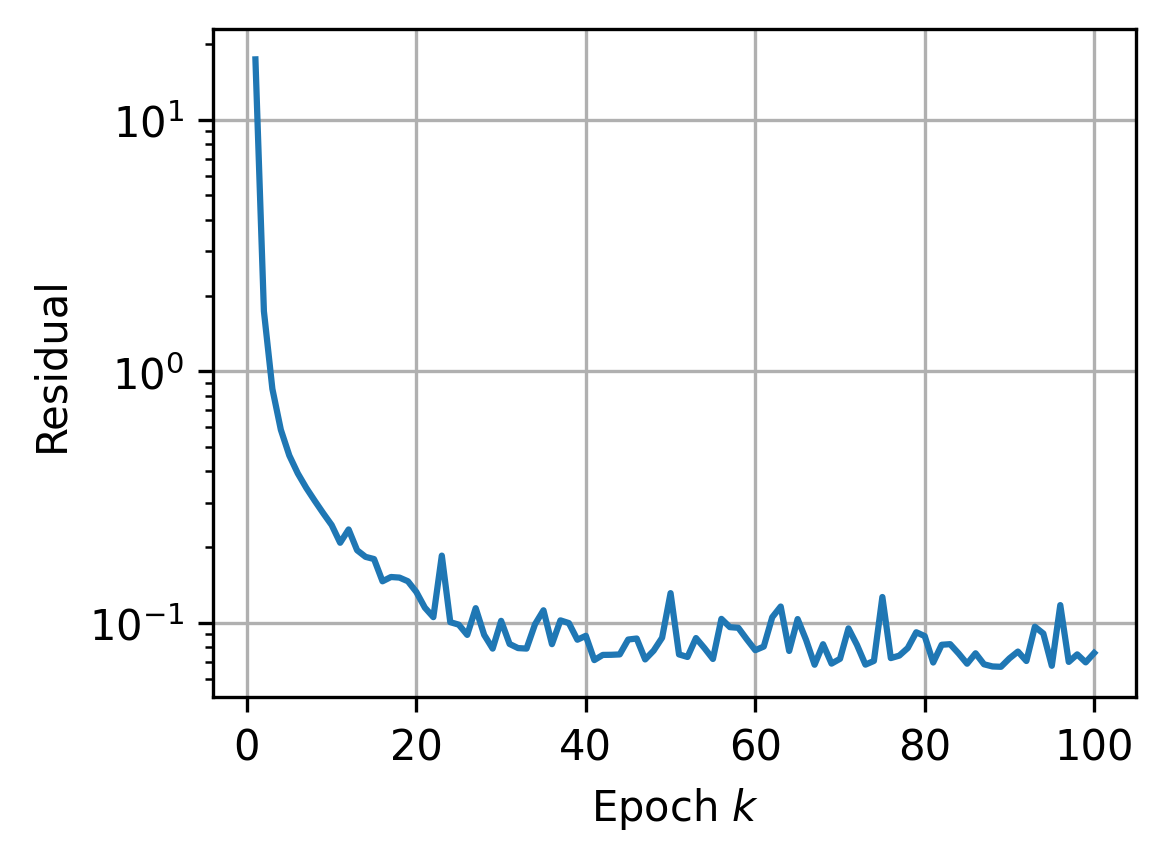}
        \caption{Residual versus the number of training epoch $k$.
        The residual is defined in~\eqref{eq:def residual} and reflects how much the necessary condition of the MFG is violated.} 
        \label{fig:asymker-res}
    \end{subfigure}
    \hfill
    \begin{subfigure}[t]{0.45\textwidth}
        \centering
        \includegraphics[height=4.2cm]{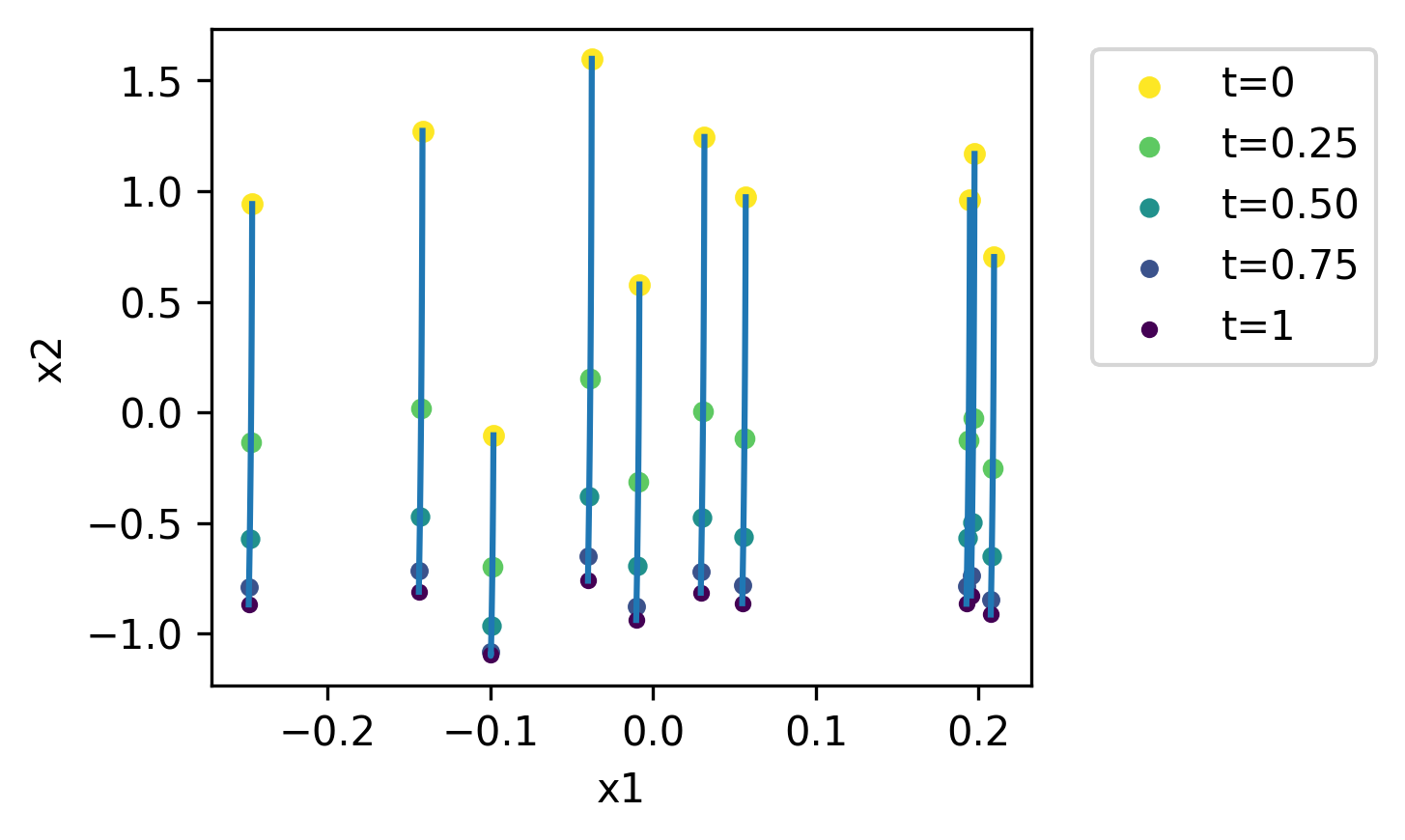}
        \caption{Sample testing trajectories from the learned control $v_\theta$. Players move toward the target line $x_2 = -1$, adapting their speed to reduce the costs.}
        \label{fig:asymker-traj}
    \end{subfigure}
    
    \vspace{0.5em}  
    
    \begin{subfigure}[t]{\textwidth}
        \centering
        \includegraphics[width=\textwidth]{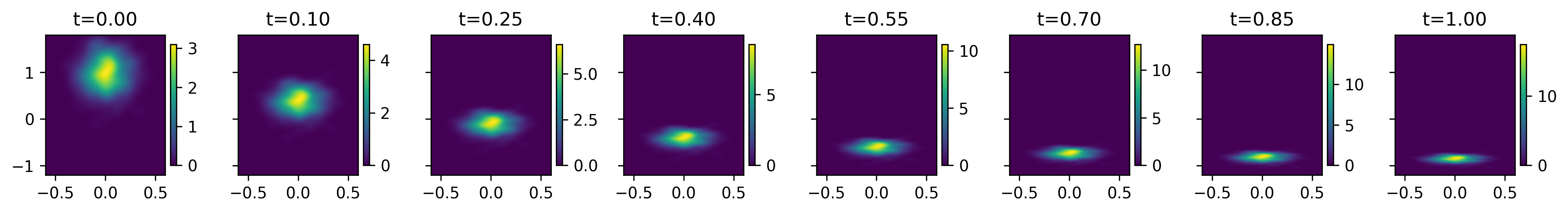}
        \caption{Estimated density evolution from test trajectories. The population tends to concentrate along $x_2$ and spreads along $x_1$ due to asymmetric interaction.}
        \label{fig:asymker-density}
    \end{subfigure}

    \caption{Numerical results for the non-potential MFG in Section~\ref{subsec:num non-pot}.     
    (a) shows convergence of the algorithm.
    (b) and (c) illustrate that the algorithm output aligns with the physical intuition of a Nash equilibrium.}
    \label{fig:non-pot}
\end{figure}

We consider an MFG on $\mathbb{R}^d$ with interaction cost 
$F[\rho](x, t) = \lambda_F \int K(x, y)\dd\rho(y)$, where $K(x, y) = \exp(a^\top(x - y))$ and $a \in \mathbb{R}^d$ is a fixed non-zero vector.
This is the smoothing coupling cost as in Example~\ref{eg:nonpot cost}. 
Since $K$ is asymmetric, the game is non-potential.
The terminal cost is $G(x) = \lambda_G(x_2 - c)^2$, where $c \in \mathbb{R}$ and $x_2$ denotes the second coordinate of $x$, encouraging players to move toward $x_2 = c$.
We set $d = 2$ and the initial distribution is Gaussian with mean $[0,1]^\top$ and covariance $\operatorname{diag}(0.02, 0.1)$. 
The model parameters are $\lambda_F = 10$, $a = [0,1]^\top$, $\lambda_G = 1$, and $c = -1$. 
As discussed in Example~\ref{eg:nonpot cost}, evaluating $F[\rho]$ and its gradient $\nabla F[\rho]$ only requires empirical averages of $e^{a^\top x}$, while $G[\rho]$ and its gradient $\nabla G[\rho]$ admit closed-form evaluations.

Training performance is measured by the residual 
\begin{equation}
\left( 
\frac{\Delta t}{n} \sum_{i=1}^{n} \sum_{j=1}^{m-1} \left\| -(D_{tt} X_i^{(\ell)})_{t_j} + \nabla F[\rho_{t_j}^{(\ell)}](X_{i,t_j}^{(\ell)}) \right\|^2
+ \frac{1}{n} \sum_{i=1}^n \left\| (D_t X_i^{(\ell)})_{t_m} + \nabla G[\rho_{t_m}^{(\ell)}](X_{i,t_m}^{(\ell)}) \right\|^2
\right)^{1/2},
\label{eq:def residual}
\end{equation}
which is a sample-based approximation of the $\mu\otimes[0,1]$ norm of the first-order variation of $\mathcal{J}(X;\rho^{(\ell)})$ at $X=X^{(\ell)}$. 
A small residual indicates proximity to a stationary point. 
Figure~\ref{fig:asymker-res} shows that the algorithm converges to a state with a residual of $10^{-1}$.

To evaluate the learned policy $v_\theta$, we simulate trajectories of testing particles resampled from the initial distribution. 
Figure~\ref{fig:asymker-traj} shows 10 testing trajectories, and Figure~\ref{fig:asymker-density} shows the density evolution of testing particles via Gaussian kernel density estimation. 
Players move toward the terminal line $x_2 = -1$ to reduce terminal cost. 
Those with larger initial $x_2$ move faster to escape higher interaction cost, while those with smaller $x_2$ balance interaction and dynamic costs. 
As players evolve, the population compresses along $x_2$ axis and spreads along $x_1$ axis, making $x-y$ nearly orthogonal to $a$. 
This profile aligns with the optimal strategy of matching population pace while progressing toward $x_2 = -1$.
More detailed discussion is in Appendix~\ref{apsubsec:non-pot}.

\begin{figure}[t]
    \centering
    \begin{subfigure}[t]{0.48\textwidth}
        \centering
        \includegraphics[width=\textwidth]{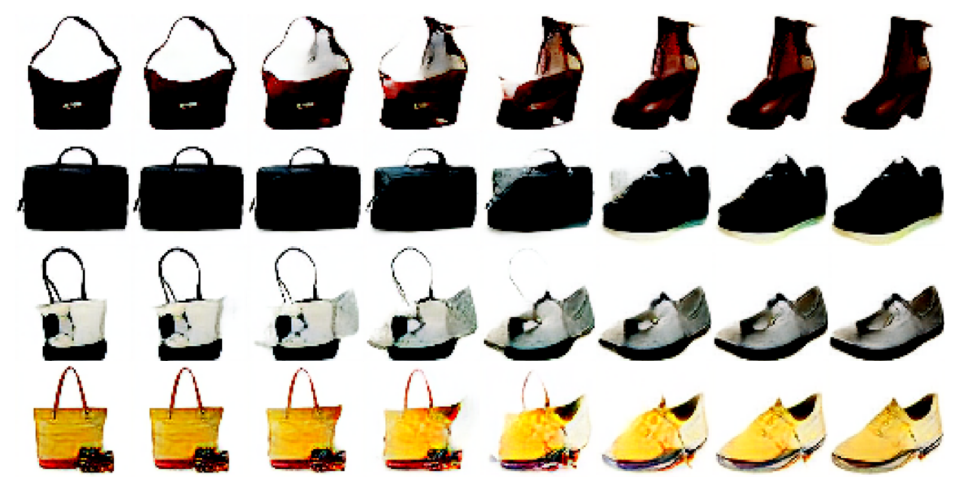}
        \caption{Handbags $\to$ Shoes}
    \end{subfigure}
    \hspace{1em} 
    \begin{subfigure}[t]{0.48\textwidth}
        \centering
        \includegraphics[width=\textwidth]{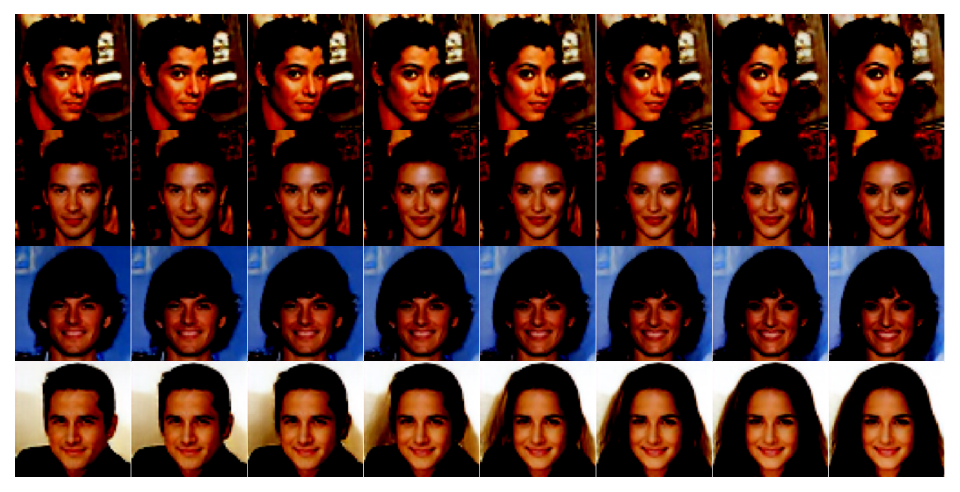}
        \caption{CelebA male $\to$ CelebA female}
    \end{subfigure}
    
    \caption{Randomly selected three OT trajectories for image-to-image translation of (a) handbags $\to$ shoes and (b) CelebA male $\to$ CelebA female.}
    \label{fig:image_translation_trajectory}
\end{figure}

\subsection{Image-to-image translation}
\label{subsec:num image}

We use Algorithm~\ref{alg: gradient descent} to learn the dynamic OT between distributions of two sets of RGB images. The first set contains Shoes~\citep{yu2014fine} and Handbags~\citep{zhu2016generative} and the second set contains CelebA male and female~\citep{liu2015deep} images.
Specifically, the goal is to conditionally generate shoe/CelebA female images by transporting handbag/CelebA male images, respectively. 
We evaluate the quality of translated images by using Fréchet inception distance (FID)~\citep{heusel2017gans} and compare with different baselines: flow matching methods including Optimal Transport Conditional Flow Matching (OT-CFM)~\citep{tong2023improving}, Stochastic Interpolants (SI)~\citep{albergo2023stointerp}, Rectified Flow~\citep{liu2023flow}, Schrödinger Bridge Conditional Flow Matching (SB-CFM)~\citep{tong2023improving}; two representative generative adversarial networks (GANs), Disco GAN~\citep{kim2017learning} and Cycle GAN~\citep{zhu2017unpaired}; Neural Optimal Transport (NOT)~\citep{korotin2023neural}; and Q-flow~\citep{xu2025qflow}, which aims to learn dynamic OT using flow.

We train a deep variational autoencoder (VAE) to compress images into a latent space following~\citet{rombach2022high}, so that learning dynamic OT between two sets of images can be conducted in the latent space for all methods. Table~\ref{tab:fid_comparison} reports FID between translated samples from the test set and ground-truth images from the held-out set, demonstrating that our method is comparable to Q-Flow and outperforms the baselines. Figure~\ref{fig:image_translation_trajectory} visualizes OT trajectories from handbag/CelebA male to shoe/CelebA female images, respectively. We can observe that our method produces smooth and coherent translations, particularly in terms of color consistency and reduction of visual artifacts.

\begin{table}[t]
\centering
\caption{FID of translated images of our method and baselines. Lower FID is better. Results for Disco GAN, Cycle GAN, and NOT are reported from \citet{korotin2023neural}. Results for Q-Flow is reported from~\citet{xu2025qflow}.}
\resizebox{\columnwidth}{!}{
\begin{tabular}{lcccccccccc}
\toprule
& \textbf{Ours} & Q-Flow & OT-CFM & SI & Rectified Flow & SB-CFM & Disco GAN & Cycle GAN & NOT  \\
\midrule
Handbag $\to$ Shoes & 12.44 & 12.34 & 13.01 & 15.87 & 13.91  & 12.70 & 22.42 &  16.00 & 13.77 \\
Male $\to$ Female  & 9.68 &  9.66 &  12.88 & 16.39 & 14.01 & 11.55 & 35.64 & 17.74  & 13.23 \\
\bottomrule
\end{tabular}
}
\label{tab:fid_comparison}
\end{table}

\section{Discussion}

The work can be extended in several future directions.
First, although the particle-based method is simulation-free, it incurs significant memory costs when the number of trajectories and that of time steps are large.
Second, while our framework applies to general MFGs without requiring a variational formulation, the theoretical convergence guarantees currently only cover the optimal control setting.
Third, the convergence analysis considers the idealized proximal fixed point scheme, and it would be useful to develop a more complete analysis that accounts for practical sources of error, such as sampling, finite difference approximations, and flow matching inaccuracies.
Finally, it would be useful to extend the method to more datasets and real applications. 

\subsubsection*{Acknowledgments}
The authors thank Chen Xu and Edward Chen for their help with numerical experiments. 
The project was supported by the Simons Foundation (grant ID: MPS-MODL-00814643).
XC was also partially supported by 
NSF DMS-2237842. The work of JL and YX was partially funded by NSF DMS-2134037, CMMI-2112533, and the Coca-Cola Foundation.

\bibliography{reference}
\bibliographystyle{iclr2026_conference}

\appendix
\section{Table of notations}
\label{apsec:notation}

\renewcommand{\arraystretch}{1.2}

\begin{longtable}{>{\raggedright\arraybackslash}m{2cm} p{12.5cm}}
\caption{Summary of Notations} \label{tab:notation} \\

\toprule
\textbf{Notation} & \textbf{Description} \\
\midrule
\endfirsthead

\toprule
\textbf{Notation} & \textbf{Description} \\
\midrule
\endhead

\bottomrule
\multicolumn{2}{r}{\textit{(continued on next page)}} \\
\endfoot

\bottomrule
\endlastfoot

$\calP_2(\bbR^d)$ & Set of Borel probability measures on $\bbR^d$ with finite second moment. \\
$L^2(\mu)$ & Space of functions $\xi:\bbR^d\to\bbR^d$ such that $\int\|\xi(x)\|^2\,\dd\mu(x)<\infty$. \\
$\langle\xi,\eta\rangle_{\mu}$ & Inner product on $L^2(\mu)$ defined as $\int \langle\xi(x),\eta(x)\rangle\,\dd\mu(x)$. \\
$\|\xi\|_{\mu}$ & Norm on $L^2(\mu)$ defined as $\left(\langle\xi,\xi\rangle_{\mu}\right)^{1/2}$. \\
$L^2(\mu \otimes [0,1])$ & Space of functions $\xi:\bbR^d \times [0,1] \to \bbR^d$ with finite $\mu \otimes \text{Lebesgue}$-norm. \\
$\langle\xi,\eta\rangle_{\mu \otimes [0,1]}$ & Inner product on $L^2(\mu \otimes [0,1])$ defined as $\int_0^1 \langle\xi_t,\eta_t\rangle_{\mu}\,\dd t$. \\
$\|\xi\|_{\mu \otimes [0,1]}$ & Norm on $L^2(\mu \otimes [0,1])$ defined as $\left( \langle\xi,\xi\rangle_{\mu \otimes [0,1]} \right)^{1/2}$. \\
\midrule
$\mu$ & Fixed initial distribution in $\calP_2(\bbR^d)$. \\
$\rho$ & Population distribution. Depending on context, $\rho\in \calP_2(\bbR^d)$ or $\rho:[0,1]\to \calP_2(\bbR^d),\rho_t=\rho(t)$. \\
$v$ & Control function, velocity field. Depending on context, $v:\bbR^d \to \bbR^d$ or $v:[0,1] \to L^2(\bbR^d;\bbR^d),v_t=v(t)$. \\
$X$ & Characteristic map. Depending on context, $X:\bbR^d \to \bbR^d$ or $X:[0,1] \to L^2(\mu),X_t=X(t)$. \\
$m$ & Momentum, vector-valued Radon measure $m=\rho v$.\\
\midrule
$C_\rho$ & Space of absolutely continuous curves $AC^2(0,1;(\calP_2(\bbR^d), W_2))$ with finite quadratic energy. \\
$C_X$ & Set of maps $X:\bbR^d \times [0,1] \to \bbR^d$ such that $X(x,\cdot)\in AC^2(0,1;\bbR^d)$ for $\mu$-a.e.\ $x$. \\
$C_v$ & Set of functions $v:\bbR^d\times[0,1]\to\bbR^d$ that are bounded and Lipschitz continuous in $x$ on every compact set $B\subset\bbR^d$\\
$C_{(\rho,v)}$ & Feasible set of pairs of distribution flow and velocity field $(\rho,v)$, defined in~\eqref{eq: cstr set of rho,v}. \\
$C_{(X,v)}$ & Feasible set of pairs of characteristic map and velocity field $(X,v)$, defined in~\eqref{eq: C_X,v}. \\
$C_{(\rho,m)}$ & Feasible set of pairs of distribution flow and vector-valued Radon measure flow $(\rho,m)$ defined in~\eqref{eq:def cstr rho m}. \\
\midrule
$\mathcal{R}(\rho,v)$& Dynamic cost of $(\rho,v)\in C_{(\rho,v)}$. $\mathcal{R}(\rho,v):=\int_0^1\int\|v_t(x)\|^2\dd\rho_t(x)$.\\
$\mathcal{R}(X)$& Dynamic cost of $X\in C_X$. $\mathcal{R}(X):=\int_0^1\int\|\partial_tX(x,t)\|^2\dd\mu(x)$.\\
$\mathcal{R}(\rho,m)$& Dynamic cost of $(\rho,m)\in C_{(\rho,m)}$. $\mathcal{R}(\rho,m):=\int_0^1\int\|\frac{\dd m_t}{\dd\rho_t}\|^2\dd\rho_t(x)$.\\
$F$ & Interaction cost function, $\calP(\bbR^d)\times\bbR^d\to\bbR,(\rho,x)\mapsto F[\rho](x)$. See Exs.~\ref{eg:ot-kl cost},~\ref{eg:nonpot cost}. \\
$G$ & Terminal cost function, similar to $F$. \\
$\calJ(\rho,v;\tilde{\rho})$ & Individual optimal control cost on $C_{(\rho,v)}$ for given $\tilde{\rho}\in C_{\rho}$, defined in~\eqref{eq: energh J(rho,v;rho)}. \\
$\calJ(X;\tilde{\rho})$ & Individual optimal control cost on $C_X$ for given $\tilde{\rho}\in C_{\rho}$, defined in~\eqref{eq: obj mfg X}. \\
$\calJ(\rho,m;\tilde{\rho})$ & Individual optimal control cost on $C_{(\rho,m)}$ for given $\tilde{\rho}\in C_{\rho}$, defined in~\eqref{eq:def J(rho,m;rho)}. \\
$\calF, \calG$ & Interaction and terminal potentials, $\calP_2(\bbR^d)\to\bbR$. \\
$\calJ(\rho,v)$ & MFC/potential MFG cost on $C_{(\rho,v)}$, defined in~\eqref{eq: mfc rho v}. \\
$\calJ(X)$ & MFC/potential MFG cost on $C_{X}$, defined in~\eqref{eq: obj mfc X}.  \\
$\calJ(\rho,m)$ & MFC/potential MFG cost on $C_{(\rho,m)}$, $\calJ(\rho,m):=\mathcal{R}(\rho,m)+\int\calF(\rho_t)\dd t+\calG(\rho_1)$. \\ 
\midrule
$m,\Delta t, t_j$ & $m$ is number of time steps, $\Delta t=\frac{1}{m}$ and $t_j=j\Delta t$. \\
$\alpha_{\ell}$ & Weight for fictitious play in~\eqref{eq: ficplay} or the proximal step-size in~\eqref{eq: update X}. \\
$\beta$ & Step-size for particle update in~\eqref{eq:gradient descent}. \\
$K$ & Total number of epochs in Algorithm \ref{alg: gradient descent}. \\
$n, n_1, n_2$ & Number of samples per epoch, particle update batch size, and flow matching update batch size, respectively. \\
$L,L_1, L_2$ & Number of particle update iterations and flow neural network update iterations, respectively. \\
$X_{i,t_j}$ & Approximate location of particle initialized at $x_i$ at time $t_j$, i.e., $X_{i,t_j} \approx X(x_i,t_j)$. \\

\end{longtable}

\section{Other preliminaries}

We first recall the concept of absolutely continuous curves as defined in~\citep[Def. 1.1.1]{AGS_book_gradientflows}.
\begin{definition}[Absolutely continuous curves]
\label{def:ac curve}
    Let $(\mathcal{X},d)$ be a complete metric space and let $l:(0,1)\to\mathcal{X}$ be a curve. 
    We say that $l\in AC^2(0,1;\mathcal{X})$ if there exists a function $m\in L^2(0,1)$ such that
    \begin{equation}
        d(l(s),l(t))\leq \int_s^t m(r)\dd r,\quad\forall 0<s\leq t<b.
    \label{eq:1.1.1}
    \end{equation}
\end{definition}
By~\citep[Thm. 1.1.2]{AGS_book_gradientflows}, for any $l\in AC^2(0,1;\mathcal{X})$, the limit
\begin{equation}
    |l'|(t):=\lim_{s\to t}\frac{d(l(s),l(t))}{|s-t|}
\end{equation}
exists for a.e. $t\in(0,1)$.
In addition, $|l'|\in L^2(0,1)$ and $|l'|$ is an admissible integrand of~\eqref{eq:1.1.1} and for any $m$ satisfying~\eqref{eq:1.1.1}, $|l'|(t)\leq m(t)$ for a.e. $t\in(0,1)$. $|l'|$ is called the metric derivative.

The following definition is also called the Lions derivative. The main difference between it and Fr\'{e}chet derivative is that the perturbation measure is required to have zero measure.
\begin{definition}[First variation \citep{cardaliaguet2017ficplay}]
\label{def:derivative}
Let $\calF:\calP_2(\bbR^d)\to\bbR$, we say that $D\calF:\calP_2(\bbR^d)\times\bbR^d\to\bbR$ is the first variation of $\calF$ if for any $\rho,\rho'\in\calP_2(\bbR^d)$,
\begin{equation}
    \lim_{s\to 0} \frac{1}{s}\left( \calF((1-s)\rho+s\rho')-\calF(\rho)\right) = \int D\calF[\rho](x)\dd(\rho'-\rho)(x).
\end{equation}
we say that $D\calF:\calP_2(\bbR^d)\times\bbR^d\to\bbR$ is the first-order subdifferential of $\calF$ if for any $\rho,\rho'\in\calP_2(\bbR^d)$,
\begin{equation}
    \calF((1-s)\rho+s\rho')-\calF(\rho)  \geq s\int D\calF[\rho](x)\dd(\rho'-\rho)(x).
\end{equation}
Since $D\calF$ is defined up to an additive constant, we assume that
\begin{equation}
    \int D\calF[\rho](x)\dd\rho(x)=0, \quad\forall \rho\in\calP_2(\bbR^d).
\end{equation}

\end{definition}

\section{Proofs}

\subsection{Proof of Proposition \ref{prop:MFC--MFG}}
\label{apsec: proof MFC--MFG}

\begin{proof}
Let $m = \rho v$ be the vector-valued Radon measure, and let $\frac{\dd m}{\dd \rho}$ denote its Radon--Nikodym derivative.  
Define
\begin{equation}
C_{(\rho,m)} := \bigl\{ (\tilde{\rho},\tilde{m}) : \partial_t \tilde{\rho} + \nabla \cdot \tilde{m} = 0, \, \tilde{\rho}_0 = \mu, \, \tilde{\rho} \in C_\rho \bigr\}.  
\label{eq:def cstr rho m}
\end{equation}
The constraint is linear in $(\tilde{\rho},\tilde{m})$, hence the set $C_{(\rho,m)}$ is convex.  

Define the functional
\begin{equation}
\mathcal{R}(\tilde{\rho},\tilde{m}) = 
\begin{cases}
\displaystyle \frac{1}{2}\int_0^1 \int \left\| \frac{\dd \tilde{m}_t}{\dd \tilde{\rho}_t}(x) \right\|^2 \dd \tilde{\rho}_t(x)\,\dd t, & \tilde{m}_t \ll \tilde{\rho}_t \ \forall t\in[0,1], \\
\infty, & \text{otherwise}.
\end{cases}
\end{equation}
Then $\mathcal{R}(\tilde{\rho},\tilde{m})$ is convex in $(\tilde{\rho},\tilde{m})$.  
Next, set
\begin{equation}
\mathcal{J}(\tilde{\rho},\tilde{m};\rho) := \mathcal{R}(\tilde{\rho},\tilde{m}) + \int_0^1 \int F[\rho_t] \,\dd \tilde{\rho}_t(x)\,\dd t + \int G[\rho_1] \,\dd \tilde{\rho}_1(x). 
\label{eq:def J(rho,m;rho)}
\end{equation}
The functional $\mathcal{J}(\tilde{\rho},\tilde{m};\rho)$ is convex in $(\tilde{\rho},\tilde{m})$ under the $L^2$ metric.  

Since the objective is convex and the constraint is linear, $(\tilde{\rho},\tilde{m})$ solves
\begin{equation}
\min_{(\tilde{\rho},\tilde{m}) \in C_{(\rho,m)}} \mathcal{J}(\tilde{\rho},\tilde{m};\rho)    
\end{equation}
if and only if it satisfies the first-order optimality condition:
\begin{equation}
\begin{cases}
-\partial_t \tilde{\phi} + \tfrac{1}{2} \|\nabla \tilde{\phi}\|^2 = F[\rho], & \tilde{\phi}_1 = G[\rho_1], \\
\tilde{m} = -\tilde{\rho} \nabla \tilde{\phi}, \\
\partial_t \tilde{\rho} + \nabla \cdot \tilde{m} = 0, & \tilde{\rho}_0 = \mu.
\end{cases}
\end{equation}
Here, the Hamilton-Jacobi-Bellman equation is understood to hold on the support of $\tilde{\rho}$, with equality replaced by inequality $-\partial_t \tilde{\phi} + \tfrac{1}{2} \|\nabla \tilde{\phi}\|^2 \leq F[\rho],  \tilde{\phi}_1 \leq G[\rho_1]$ outside the support. The Fokker--Planck equation is understood in the sense of distributions.  

Therefore, $(\hat{\rho},\hat{v})$ solves~\eqref{eq: mfg rho v} if and only if it satisfies
\begin{equation}
\begin{cases}
-\partial_t \hat{\phi} + \tfrac{1}{2} \|\nabla \hat{\phi}\|^2 = F[\hat{\rho}], & \hat{\phi}_1 = G[\hat{\rho}_1], \\
\hat{v} = -\nabla \hat{\phi}, \\
\partial_t \hat{\rho} + \nabla \cdot (\hat{\rho} \hat{v}) = 0, & \hat{\rho}_0 = \mu,
\end{cases}
\label{eq:optcond}
\end{equation}

Similarly, deriving the first-order optimality condition for~\eqref{eq: mfc rho v} yields the same system~\eqref{eq:optcond}. Hence, $(\hat{\rho},\hat{v})$ solving~\eqref{eq: mfg rho v} is necessary for solving~\eqref{eq: mfc rho v}, and becomes sufficient provided that $\mathcal{F}, \mathcal{G}$ are convex in $\rho$ under the $L^2$ metric.
\end{proof}

\subsection{Proof of Lemma~\ref{lem: X to rho v}}
\label{apsec: proof lem X to rho v}

We prove the Lemma in two steps.
We first show that the distribution flow $\tilde{\rho}$ induced by $\tilde{X}$ is in $C_{\rho}$ and the interaction and terminal cost are unchanged with the change from Lagrangian coordinate $\tilde{X}$ to Eulerian coordinate $\tilde{\rho}$.
As summarized in the following lemma.
\begin{lemma}
\label{lem: X to rho v, prepare}
    Let $\tilde{X}\in C_X$ and set $\tilde{\rho}_t:=(\tilde{X}_t)_{\#}\mu$. Then $\tilde{\rho}\in C_\rho$, $\int F(x)\dd\tilde{\rho}_t(x)=\mathbb{E}_{x\sim\mu}[F(\tilde{X}_t(x))]$ for any $F:\bbR^d\to\bbR^d$ and $\calF(\tilde{\rho}_t)=\calF((\tilde{X}_t)_{\#}\mu)$ for any $\calF:\calP_2(\bbR^d)\to\bbR$.
\end{lemma}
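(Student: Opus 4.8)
The plan is to establish three separate claims about the pushforward flow $\tilde\rho_t := (\tilde X_t)_\#\mu$: (a) the change-of-variables identities for linear and nonlinear functionals, and (b) the absolute continuity of $t\mapsto\tilde\rho_t$ in the Wasserstein-2 space. The two identities are almost immediate from the definition of pushforward, so I would dispatch them first, and then spend the bulk of the argument on the $AC^2$ claim, which is the real content of the lemma.

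First I would handle the functional identities. For any Borel $F:\bbR^d\to\bbR^d$ (or $\bbR$), the definition of pushforward gives $\int F(x)\,\dd\tilde\rho_t(x) = \int F(\tilde X_t(x))\,\dd\mu(x) = \bbE_{x\sim\mu}[F(\tilde X_t(x))]$, valid whenever either side is well-defined; this is literally the change-of-variables formula for pushforward measures. The statement $\calF(\tilde\rho_t) = \calF((\tilde X_t)_\#\mu)$ is a tautology once we recall $\tilde\rho_t$ was \emph{defined} to be $(\tilde X_t)_\#\mu$, so there is nothing to prove there beyond unwinding notation — I would simply note this. One small point to check is that $\tilde\rho_t\in\calP_2(\bbR^d)$ for each $t$, i.e. finite second moment: this follows because $\int\|x\|^2\,\dd\tilde\rho_t(x) = \bbE_{x\sim\mu}\|\tilde X_t(x)\|^2$, and since $\tilde X\in C_X$ means $\tilde X(x,\cdot)\in AC^2(0,1;\bbR^d)$ with $\tilde X(x,0)=x$, we have $\|\tilde X_t(x)\| \le \|x\| + \int_0^1\|\partial_t\tilde X(x,s)\|\,\dd s$, which is square-integrable in $x$ because $\mu\in\calP_2$ and $\partial_t\tilde X\in L^2(\mu\otimes[0,1])$.

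The main work is showing $\tilde\rho\in C_\rho = AC^2(0,1;(\calP_2(\bbR^d),W_2))$. The natural approach: for $0<s\le t<1$, the map $(\tilde X_s(x),\tilde X_t(x))_\#\mu$ is a coupling of $\tilde\rho_s$ and $\tilde\rho_t$, so
\begin{equation}
W_2^2(\tilde\rho_s,\tilde\rho_t) \le \int \|\tilde X_t(x)-\tilde X_s(x)\|^2\,\dd\mu(x) = \int\Big\|\int_s^t \partial_r\tilde X(x,r)\,\dd r\Big\|^2\,\dd\mu(x).
\end{equation}
By Cauchy–Schwarz in $r$, the inner quantity is at most $(t-s)\int_s^t\|\partial_r\tilde X(x,r)\|^2\,\dd r$, and integrating in $x$ and taking square roots gives $W_2(\tilde\rho_s,\tilde\rho_t)\le \int_s^t m(r)\,\dd r$ with $m(r) := \big(\int\|\partial_r\tilde X(x,r)\|^2\,\dd\mu(x)\big)^{1/2} = \|\partial_r\tilde X(\cdot,r)\|_\mu$. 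Since $\tilde X\in C_X\subset H^1(0,1;L^2(\mu))$, we have $m\in L^2(0,1)$, which is exactly the defining condition (Definition~\ref{def:ac curve}) for $\tilde\rho\in AC^2(0,1;(\calP_2(\bbR^d),W_2))$, i.e. $\tilde\rho\in C_\rho$.

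The only genuine subtlety I anticipate is a measurability/Fubini issue: I need $(x,r)\mapsto\partial_r\tilde X(x,r)$ to be jointly measurable so that the interchange of $\int\dd\mu(x)$ and $\int_s^t\dd r$ above is legitimate, and I need to confirm the distributional time-derivative of the curve $t\mapsto\tilde X(\cdot,t)\in L^2(\mu)$ coincides $\mu$-a.e., for a.e.\ $t$, with the pointwise weak derivatives $\partial_t\tilde X(x,\cdot)$ guaranteed by $\tilde X\in C_X$. Both are standard consequences of the definition of $C_X = H^1(0,1;L^2(\mu))$ (or the characterization of $C_X$ via $\tilde X(x,\cdot)\in AC^2$ for $\mu$-a.e.\ $x$ together with $\partial_t\tilde X\in L^2(\mu\otimes[0,1])$), so I would invoke the Bochner-space / Fubini machinery and the equivalence of these descriptions of $C_X$ rather than belabor it. Everything else is bookkeeping.
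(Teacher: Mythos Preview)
Your approach is the same as the paper's: the functional identities are immediate from the pushforward definition, the second-moment bound follows from $\tilde X(x,t)=x+\int_0^t\partial_s\tilde X(x,s)\,\dd s$, and $\tilde\rho\in C_\rho$ is established via the coupling $(\tilde X_s,\tilde X_t)_\#\mu$ together with the integrand $m(r)=\|\partial_r\tilde X(\cdot,r)\|_\mu\in L^2(0,1)$.

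There is one slip in the execution. Your Cauchy--Schwarz-in-$r$ step yields $W_2^2(\tilde\rho_s,\tilde\rho_t)\le (t-s)\int_s^t m(r)^2\,\dd r$, and taking a square root gives $\sqrt{(t-s)\int_s^t m(r)^2\,\dd r}$, which is \emph{not} bounded above by $\int_s^t m(r)\,\dd r$; indeed Cauchy--Schwarz gives the opposite inequality $\big(\int_s^t m\big)^2\le (t-s)\int_s^t m^2$. The paper bypasses this by applying Minkowski's integral inequality (the triangle inequality for the Bochner integral in $L^2(\mu)$) directly:
\[
W_2(\tilde\rho_s,\tilde\rho_t)\le \Big\|\int_s^t\partial_r\tilde X(\cdot,r)\,\dd r\Big\|_\mu \le \int_s^t\big\|\partial_r\tilde X(\cdot,r)\big\|_\mu\,\dd r = \int_s^t m(r)\,\dd r.
\]
Replace your Cauchy--Schwarz step with this and the argument goes through verbatim.
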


\begin{proof}
The costs are unchanged directly by the definition of pushforward.

We first prove that $\tilde{\rho}_t\in \calP_2(\bbR^d)$.
    Fix $t\in[0,1]$. For $\mu$-a.e.\ $x$, since $\tilde{X}(x,\cdot)$ is absolutely continuous, we have
    \begin{equation}
        \tilde{X}(x,t) = x + \int_0^t \partial_s \tilde{X}(x,s)\dd s.
    \label{eq1}
    \end{equation}
    Therefore
    \begin{equation}
        \|\tilde{X}(x,t)\|^2\leq 2\|x\|^2 + 2\left\|\int_0^t\partial_s\tilde{X}(x,s)\dd s \right\|^2 
        \leq 2\|x\|^2 + 2\int_0^1 \|\partial_s \tilde{X}(x,s)\|^2\dd s,
    \label{eq2}
    \end{equation}
    where the second inequality is by the Cauchy-Schwarz inequality.
    Since $\mu\in\calP_2(\bbR^d)$ and $\tilde{X}(x,\cdot)\in AC^2(0,1;\bbR^d)$ for $\mu$-a.e.\ $x$, integrating over $\mu$ gives
    \begin{equation}
        \int\|x\|^2\dd\rho_t(x)=\int \|\tilde{X}(x,t)\|^2\dd\mu(x)\leq 2\int\|x\|^2\dd\mu(x) + 2\int\int_0^1\|\partial_s\tilde{X}(x,s)\|^2\dd s\dd\mu(x)<\infty,
    \label{eq3}
    \end{equation}
    therefore $\tilde{\rho}_t\in\calP_2(\bbR^d)$.

For any $0\leq s<t\leq 1$, 
    consider the coupling $\pi=(\tilde{X}_s,\tilde{X}_t)_{\#}\mu$ between $\tilde{\rho}_s$ and $\tilde{\rho}_t$. 
    By definition, we have
    \begin{equation}
        W_2^2(\tilde{\rho}_s,\tilde{\rho}_t)
        \leq \int_{\bbR^d\times\bbR^d}\|y-z\|^2\dd\pi(y,z)
        =\int_{\bbR^d} \|\tilde{X}_s(x)-\tilde{X}_t(x)\|^2\dd\mu(x)
        =\int_{\bbR^d} \left\|\int_s^t\partial_rX(x,r)\dd r\right\|^2\dd\mu(x).
    \label{eq4}
    \end{equation}
Let
    \begin{equation}
        g(r):=\left\|\partial_rX(\cdot,r)\right\|_{\mu}
        =\left(        \int_{\bbR^d}\|\partial_rX(x,r)\|^2\dd\mu(x) \right)^{1/2}.
    \label{eq:g(r)}
    \end{equation}    
Then $g\in L^2([0,1])$ and
    \begin{equation}
        W_2(\tilde{\rho}_s,\tilde{\rho}_t) 
        \leq \left\| \int_s^t\partial_r X(\cdot,r)\dd r \right\|_{\mu}
        \leq \int_s^t \left\|\partial_rX(\cdot,r)\right\|_{\mu}\dd r
        =\int_s^tg(r)\dd r.
    \end{equation}
Therefore $\tilde{\rho}\in C_{\rho}$.
\end{proof}

We then construct the velocity field $\tilde{v}$ such that $(\tilde{\rho},\tilde{v})\in C_{(\rho,v)}$, show that $\tilde{v}$ is the unique solution to the flow matching, and the dynamic cost in $(\tilde{\rho},\tilde{v})$ is bounded above by the dynamic cost in $\tilde{X}$.

\begin{proof}[Proof of Lemma~\ref{lem: X to rho v}]
    We construct $\tilde{v}$ and show that it solves the flow-matching.
    Define the vector measure
    \begin{equation}
        \tilde{m}_t(B):=\int_{\{x:\tilde{X}(x,t)\in B\}}\partial_t\tilde{X}(x,t)\dd\mu(x),\quad B\subset\bbR^d.
    \end{equation}
    Then $\tilde{m}_t\ll\tilde{\rho}_t$ and by Radon-Nikodym theorem, there exist a Borel vector field $\tilde{v}_t$ defined $\rho_t$-a.e. such that $\tilde{m}_t(B)=\int_B\tilde{v}_t(y)\dd\tilde{\rho}_t(y)$.
    This implies that,
    \begin{equation}
        \int \tilde{v}(\tilde{X}(x,t),t) - \partial_t\tilde{X}(x,t)\dd\mu(x)\dd t = 0,
    \label{eq:FM opt cond}
    \end{equation}
    Consider the flow matching problem
    \begin{equation}
        \min_{v} \int_0^1\int\|v(\tilde X(x,t),t)-\partial_t \tilde X(x,t)\|^2\dd\mu(x)\dd t.
    \label{eq:FM X}
    \end{equation}
    \eqref{eq:FM opt cond} shows that $\tilde{v}$ satisfies the optimality condition of~\eqref{eq:FM X}.
    Since the problem~\eqref{eq:FM X} is unconstrained with a convex objective, $\tilde{v}$ is therefore the solution to~\eqref{eq:FM X}.
    
    We then prove that $(\tilde{\rho},\tilde{v})$ satisfies the continuity equation.
    By the chain rule, 
    \begin{equation}
        \frac{\dd}{\dd t}\int\varphi(y)\dd\tilde{\rho}_t(y)
        = \frac{\dd}{\dd t}\int \varphi(\tilde{X}(x,t))\dd\mu(x) 
        = \int \nabla\varphi(\tilde{X}(x,t))\cdot\partial_t\tilde{X}(x,t)\dd\mu(x).
    \label{eq6}
    \end{equation}    
    By the definition of $\tilde{v}$, for any $\varphi\in C_c^{\infty}(\bbR^d)$,
    \begin{equation}
        \int \nabla\varphi(y)\cdot\tilde{v}(y,t)\dd\tilde{\rho}_t(y) 
        = \int \nabla\varphi(\tilde{X}(x,t))\cdot\partial_t\tilde{X}(x,t)\dd\mu(x).
    \label{eq5}
    \end{equation}
    Combining ~\eqref{eq6} and~\eqref{eq5} gives the desired weak formula of the continuity equation:
    \begin{equation}
        \frac{\dd}{\dd t}\int \varphi\dd\tilde{\rho}_t 
        = \int \nabla\varphi\cdot v_t\dd\tilde{\rho}_t.
    \end{equation}
    
    In the end, we show
    \begin{equation}
        \|\tilde{v}_t\|_{\tilde{\rho}_t}^2\leq \|\partial_t\tilde{X}(\cdot,t)\|_{\mu}^2.
    \label{eq7}
    \end{equation}
    This implies $\tilde{v}_t\in L^2(\tilde{\rho}_t)$ and therefore concludes $(\tilde{\rho},\tilde{v})\in C_{(\rho,v)}$.
    In addition, integrating~\eqref{eq7} over time and combining with Lemma~\ref{lem: X to rho v, prepare} gives $\calJ(\tilde\rho,\tilde v;\rho)\leq\calJ(\tilde X;\rho)$.
    
    Define a linear funational $T$ on $L^2(\tilde{\rho}_t)$ by
    \begin{equation}
        T(\psi):=\int \psi(y)\cdot\dd \tilde{m}_t(y) = \int \psi(\tilde{X}(x,t))\cdot\partial_t\tilde{X}(x,t)\dd\mu(x).
    \label{eq:def T}
    \end{equation}
    Then by Cauchu-Schwarz inequality in $L^2(\mu)$
    \begin{equation}
        |T(\psi)|
        \leq \|\psi\circ \tilde{X}_t\|_{\mu} \|\partial_t\tilde{X}(\cdot,t)\|_{\mu} 
        = \|\psi\|_{\tilde{\rho}_t}\|\partial_t\tilde{X}(\cdot,t)\|_{\mu},
    \end{equation}
    which implies that $T$ is a bounded linear functional on the Hilbert space $L^2(\tilde{\rho}_t)$ with operator norm
    \begin{equation}
        \|T\|\leq \|\partial_t\tilde{X}(\cdot,t)\|_{\mu}.
    \end{equation}
    By the Riesz representation theorem, there exists $v_t\in L^2(\tilde{\rho}_t)$ such that for every $\psi\in L^2(\tilde{\rho}_t)$,
    \begin{equation}
        T(\psi)=\int \psi(y)\cdot {v}_t(y)\dd\tilde{\rho}_t(y).
    \label{eq: riesz T}
    \end{equation}
    Moreover,
    \begin{equation}
        \|{v}_t\|^2_{\tilde{\rho}_t} = \|T\|^2 \leq \|\partial_t\tilde{X}(\cdot,t)\|^2_{\mu}.
    \end{equation}
    It suffices to show that $v_t=\tilde{v}_t$.
    Let $B\subset\bbR^d$ be a Borel set and $a\in\bbR^d$ be a fixed vector, then $L^2(\tilde{\rho}_t)\ni\psi(y):=a\mathbf{1}_B(y)=\begin{cases}
        a,& y\in B,\\ 0, & y\not\in B.
    \end{cases}$
    By definition~\eqref{eq:def T}, $T(\psi) = a\cdot \tilde{m}_t(B)$.
    On the other hand, by Riesz representation~\eqref{eq: riesz T}, $T(\psi) = a\cdot\int_B v_t(y)\dd\tilde{\rho}_t(y)$.
    Therefore $a\cdot \tilde{m}_t(B)=a\cdot\int_B v_t(y)\dd\tilde{\rho}_t(y)$ for any $a\in\bbR^d$ and any Borel $B\subset\bbR^d$.
    This implies that $v_t=\frac{\dd \tilde{m}_t}{\dd \tilde{\rho}_t}$ and $v_t=\tilde{v}_t$ except on a $\tilde{\rho}_t$-zero measure set, which concludes the proof. 
\end{proof}

\subsection{Proof of Lemma~\ref{lem: rho v to X}}
\label{apsec: proof lem rho v to X}

This is a direct corollary of~\citep[Prop. 8.1.8]{AGS_book_gradientflows}.

\subsection{Proof of Theorem~\ref{thm: euler=lagrangian}}
\label{apsec: proof thm euler=lagrange}

\begin{proof}
    We first prove (i).
    Since $(\rho^*,v^*)$ solves~\eqref{eq: mfg rho v}, $v^*$ is in $C_v$ and by Lemma~\ref{lem: rho v to X}, there exists $X^*\in C_X$ such that $(X^*_t)_\#\mu=\rho^*_t$ and $\calJ(X^*;\rho^*) = \calJ(\rho^*,v^*;\rho^*)$.
    For any other $\tilde{X}\in C_X$, by Lemma~\ref{lem: X to rho v}, there exist $(\tilde{\rho},\tilde{v})\in C_{(\rho,v)}$ such that $\calJ(\tilde{\rho},\tilde{v};\rho^*)\leq \calJ(\tilde{X};\rho^*)$.
    Therefore, $X^*$ solves~\eqref{eq: mfg X}.

    The same logic applies to MFC and concludes the first part of (ii).

    For the second part of (ii), it suffices to prove $\calJ(\rho^*,v^*)=\calJ(\rho,v)$.
    By optimality of $(\rho,v)$, we have $\calJ(\rho^*,v^*)\geq\calJ(\rho,v)$.
    By Lemma~\eqref{lem: rho v to X}, there is an $X\in C_X$ such that $\calJ(X)=\calJ(\rho,v)$.
    By Lemma~\eqref{lem: X to rho v} and optimality of $X^*$, we have $\calJ(\rho^*,v^*)\leq \calJ(X^*) \leq \calJ(X) = \calJ(\rho,v)$, which concludes the proof.
\end{proof}

\subsection{Proof of Lemma~\ref{lem: descent by X update}}
\label{apsec: proof lem descent by X update}

\begin{proof}
    The dynamic cost
    \begin{equation}
        \mathbb{E}_{x\sim\mu}\left[\int_0^1\frac{1}{2}\|\partial_t\tilde{X}(x,t)\|^2\dd t\right] 
        = \|\partial_t\tilde{X}\|_{\mu\otimes[0,1]}^2
    \end{equation}
    is strictly convex in $\tilde{X}$, 
    In addition, $F[\rho_t^{(\ell)}],G[\rho_1^{(\ell)}]$ are $L$-smooth and $\alpha_{\ell}<\frac{1}{L}$, therefore the objective of the update~\eqref{eq: update X} is strongly convex. Therefore, the minimizer is unique if it exists.

    Let $\{X^{(n)}\}\subset C_X$ be a minimizing sequence $\lim_{n\to\infty}\calJ(X^{(n)};\rho^{(\ell)})=\inf_{X\in C_X}\calJ(X;\rho^{(\ell)})$. 
    Then by $L$-smoothness of $F[\rho_t^{(\ell)}]$ and $G[\rho_1^{(\ell)}]$, and $0<\alpha_{\ell}<\frac{1}{L}$, we have that $$\sup_n \left(\|\partial_t X^{(n)}\|_{\mu\otimes[0,1]} + \|X^{(n)}\|_{\mu\otimes[0,1]} 
    + \|X^{(n)}_1\|_{\mu}\right) < \infty,$$
    which implies $\{X^{(n)}\}$ is bounded in $H^1(0,1;L^2(\mu))$.
    Therefore there exists a subsequence, still denoted as $X^{(n)}$, and $X^*\in H^1(0,1;L^2(\mu))$ such that
    $X^{(n)}\to X^*$ weakly in $H^1(0,1;L^2(\mu))$.

    By weakly convergence in $H^1(0,1;L^2(\mu))$ and weak lower semicontinuity of $L^2$ norm, $\|\partial_tX^*\|_{\mu\otimes[0,1]}^2\leq\liminf_{n\to\infty} \|\partial_t X^{(n)}\|_{\mu\otimes[0,1]}^2$.

    By compact embedding theorem, there exists a subsequence, still denoted as $X^{(n)}$ such that $X^{(n)}\to X^*$ strongly in $C(0,T;L^2(\mu))$ and therefore $X^{(n)}\to X^*$ strongly in $L^2(0,T;L^2(\mu))$ and $X^{(n)}_1\to X^*_1$ strongly in $L^2(\mu)$. 
    Combining with the kinetic term, we have $\calJ(X^*;\rho^{(\ell)})\leq\liminf_{n\to\infty}\calJ(X^{(n)};\rho^{(\ell)})$, which implies $X^*$ is the minimizer.

    By the update rule, $X^{(\ell+1/2)}=X^*$ and the decay property~\eqref{eq:decay} holds.
\end{proof}

\subsection{Proof of Lemma~\ref{lem: descent by v update}}
\label{apsec: proof lem descent by v update}
\begin{proof}
    By Lemma~\ref{lem: X to rho v}, $\rho^{(\ell+1)}\in C_{\rho}$, $v^{(\ell+1)}_t$ is unique upto a $\rho_t$-zero measure set.
    Then by Lemma~\ref{lem: X to rho v} and Lemma~\ref{lem: rho v to X}, we have
    \begin{equation}
        \calJ(X^{(\ell+1)};\rho) =\calJ(\rho^{(\ell+1)},v^{(\ell+1)};\rho) \leq \calJ(X^{(\ell+1/2)};\rho),
    \end{equation}
    which concludes the proof.
\end{proof}

\subsection{Proof of Theorem~\ref{thm: cvg rate}}
\label{apsec: proof thm cvg rate}

\begin{proof}
    Since $F,G$ are independent of $\rho$, combining Lemmas~\ref{lem: descent by X update} and~\ref{lem: descent by v update} with $\alpha_{\ell}=\alpha$ gives
    \begin{equation}
        \frac{1}{2\alpha}\left( \|X^{(\ell+1/2)}-X^{(\ell)}\|_{\mu\otimes[0,1]}^2 + \|X^{(\ell+1/2)}_1-X^{(\ell)}_1\|_{\mu}^2 \right)
        \leq \calJ(X^{(\ell)})
         - \calJ(X^{(\ell+1)})
    \end{equation}
    Telescoping both sides and $\calJ(X)\geq \underline{\calJ}$ gives~\eqref{eq:sublinear}.
    
    If $F,G$ are $\lambda$-convex, then for any $X,Y\in C_X$,
    \begin{equation}
    \begin{aligned}
        &\lambda \left( \|X-Y\|_{\mu\otimes[0,1]}^2 + \|X_1-Y_1\|_{\mu}^2 \right)\\
        &\leq \|\partial_t(X-Y)\|_{\mu\otimes[0,1]}^2
        + \int_0^1\langle \nabla F(X_t)-\nabla F(Y_t),X_t-Y_t\rangle_{\mu}\dd t
        + \langle \nabla G(X_1)-\nabla G(Y_1),X_1-Y_1\rangle_{\mu}
    \end{aligned}
    \end{equation}
    Take $X=X^{(\ell+1)}$ and $Y=X^*$.
    By the update rule and the optimality of $X^*$, we have
    \begin{equation}
    \begin{aligned}
        &\lambda \left( \|X^{(\ell+1)}-X^*\|_{\mu\otimes[0,1]}^2 + \|X_1^{(\ell+1)}-X_1^*\|_{\mu}^2 \right)\\
        &\leq -\frac{1}{\alpha}\left( \langle X^{(\ell+1)}-X^{(\ell)},X^{(\ell+1)}-X^*\rangle_{\mu\otimes[0,1]} 
        + \langle X_1^{(\ell+1)}-X_1^{(\ell)},X_1^{(\ell+1)}-X_1^*\rangle_{\mu} \right),
    \label{eq:monotone}
    \end{aligned}
    \end{equation}
    By algebra on the inner product
    \begin{equation}
        \langle a-b,a-c\rangle = \frac{1}{2}\left( \|a-b\|^2 + \|a-c\|^2 - \|b-c\|^2 \right),
    \end{equation}
    Take $a=X^{(\ell+1)},b=X^{(\ell)},c=X^*$. Then, ~\eqref{eq:monotone} gives
    \begin{equation}
    \begin{aligned}
        &(2\lambda\alpha+1) \left( \|X^{(\ell+1)}-X^*\|_{\mu\otimes[0,1]}^2 + \|X_1^{(\ell+1)}-X_1^*\|_{\mu}^2 \right)
        +\left(\|X^{(\ell+1)}-X^{(\ell)}\|_{\mu\otimes[0,1]}^2 +\|X^{(\ell+1)}_1-X^{(\ell)}_1\|_{\mu}^2\right)\\
        &\leq \left( \|X^{(\ell)}-X^*\|_{\mu\otimes[0,1]}^2 + \|X_1^{(\ell)}-X_1^*\|_{\mu}^2 \right).
    \end{aligned}
    \end{equation}
    Therefore,
    \begin{equation}
        \left( \|X^{(\ell+1)}-X^*\|_{\mu\otimes[0,1]}^2 + \|X_1^{(\ell+1)}-X_1^*\|_{\mu}^2 \right)
        \leq \frac{1}{1+2\lambda\alpha}\left( \|X^{(\ell)}-X^*\|_{\mu\otimes[0,1]}^2 + \|X_1^{(\ell)}-X_1^*\|_{\mu}^2 \right),
    \end{equation}
    and~\eqref{eq:linear} holds.
\end{proof}

\section{Experiment details}
\label{appendix:experiment}

\subsection{2D toy example}

We use multilayer perceptrons (MLP) with six hidden layers of width 512 and Rectified Linear Unit (ReLU) activations to model the velocity field and the classifier. The time $t$ is concatenated with the input, and the concatenated input is fed into the velocity field. We use \texttt{rk4} to solve ODEs. All other hyperparameters, aside from the model architecture, are listed in Table~\ref{tab:toy_example_hyperparams}.

\begin{table}[h]
\centering
\caption{Hyperparameters for 2D toy example.}
\begin{tabular}{l c}
\hline
\textbf{Hyperparameter} & \textbf{Value} \\
\hline
Total outer loop                     & 20    \\
Classifier training batch size        & 2048  \\
Classifier training steps             & 1000  \\
Classifier learning rate              & 0.001 \\
Classifier intermediate training frequency & 10    \\
Classifier intermediate training steps & 20    \\
Velocity field training batch size    & 2048  \\
Velocity field training steps         & 1000  \\
Velocity field learning rate          & 0.001 \\
Particle optimization batch size      & 2048  \\
Particle optimization steps           & 1000  \\
Particle optimization learning rate   & 0.001 \\
Number of timesteps                   & 10    \\
\hline
\end{tabular}
\label{tab:toy_example_hyperparams}
\end{table}

\subsection{Non-potential MFG}
\label{apsubsec:non-pot}

Notice that in~\eqref{eq: mfg X}, a vanishing first-order variation is a necessary condition for $(X, \rho)$ to be a fixed point. 
The residual defined in~\eqref{eq:def residual} provides a sampling-based approximation of the $L^2(\mu)$ norm of the first-order variation of the functional $\mathcal{J}(X; \rho^{(\ell)})$ evaluated at $X=X^{(\ell)}$, i.e.
\begin{equation}
    \left( \int_0^1\left\|-(\partial_{tt}X^{(\ell)})_t+\nabla F[\rho_t^{(\ell)}]\right\|_{\mu}^2\dd t 
    + \left\|(\partial_tX^{(\ell)})_1 + \nabla G[\rho_1^{(\ell)}](X_1^{(\ell)})\right\|_{\mu}^2 \right)^{1/2}.
\label{eq:def residual cts}
\end{equation}
Therefore, a small residual~\eqref{eq:def residual} is necessary for $(X^{(\ell)},\rho^{(\ell)})$ to approximate a fixed point. Figure~\ref{fig:asymker-res} illustrates that our algorithm achieves a residual on the order of $10^{-1}$.

To assess whether the learned neural network control $v_{\theta}$ induces a Nash equilibrium, we resample testing particles from the initial distribution and simulate their trajectories using $v_\theta$. Figure~\ref{fig:asymker-traj} shows 10 testing trajectories. Figure~\ref{fig:asymker-density} displays the density evolution based on all testing trajectories, estimated using Gaussian kernel density estimation.

All players move toward the line $x_2 = -1$ to reduce the terminal cost. For players with initially larger $x_2$ values, the interaction cost dominates early in the trajectory since there is more mass ahead of them along $a$. To mitigate this, they move quickly in the direction of $a$. Players starting at smaller $x_2$ values also tend to move in the direction of $a$ to avoid falling behind when other players catch up. However, because their initial interaction cost is lower, they balance interaction and dynamic costs, leading to slower initial movement.
Although the initial distribution has a larger variance along the $x_2$ direction, players' desire to reduce interaction cost causes them to compress along $x_2$ and spread out along $x_1$ after $t = 0.25$, making $x-y$ nearly orthogonal to $a$. Under this population profile, the optimal strategy becomes matching the pace of the group while moving toward the target terminal location $x_2 = -1$.

Overall, the trajectories in Figure~\ref{fig:asymker-traj} and the density evolution in Figure~\ref{fig:asymker-density} are consistent with the physical interpretation of a Nash equilibrium for this test case.

The hyperparameters for the algorithm are detailed below. 
We use MLP with 3 hidden layers of width [4, 8, 16]. We use \texttt{euler} (forward Euler) to solve ODEs. All other hyperparameters, aside from the model architecture and pre-specified constants, are listed in Table~\ref{tab:np_mfg_hyperparams}.

\begin{table}[h]
\centering
\caption{Hyperparameters for Non-Potential MFG.}
\begin{tabular}{l c}
\hline
\textbf{Hyperparameter} & \textbf{Values} \\
\hline
Total outer loop                       & 100    \\

Velocity field training steps         & 100  \\
Velocity field learning rate          & 0.01 \\

Particle optimization steps           & 100  \\
Particle optimization learning rate   & 0.01 \\
Number of timesteps                   & 20    \\
\hline
\end{tabular}

\label{tab:np_mfg_hyperparams}
\end{table}

\subsection{Image-to-image translation}

For image-to-image translation, we first train a deep variational autoencoder (VAE) to compress images into a latent space. 
The deep VAE is trained on the same setup following~\citep{rombach2022high} in a latent space of $\mathbb{R}^{8\times12\times12}$, enabling dynamic OT to be learned in the latent space across all methods. We use a constant learning rate of 4.5e-06 and a batch size of 32 for training VAE on both datasets. For the handbags–shoes dataset, a single VAE is trained for 100 epochs, using 80\% of the images for training and splitting the remaining 20\% evenly into validation and held-out sets. The held-out set is excluded from training and later used for FID evaluation. For the CelebA dataset, we train a single VAE for 50 epochs, adopting the standard training, validation, and test splits from \texttt{torchvision.datasets.CelebA}. The test set is excluded from training and used only for FID evaluation. Table~\ref{tbl:fid_vae} reports the FID between reconstructed images from the training set and ground-truth images from the held-out sets (handbags, shoes, CelebA male, and CelebA female), presenting the quality of the trained VAE.

We use a convolutional U-Net architecture~\citep{ronneberger2015u} for the classifier. The Swish activation function is used throughout the network, except for the final fully connected layer, where ReLU is used. For the velocity field, we use a stack of convolutional layers with ReLU activations. The time $t$ is concatenated with the input along the channel dimension, and the concatenated input is fed into each convolutional layer. To ensure a fair comparison, the same velocity field architecture is used for both our method and all baselines in the experiments. The detailed architectures of the classifier and velocity field are provided in Table~\ref{tbl:architecture_classifier} and Table~\ref{tbl:architecture_velocity_field}, respectively.

We initialize the velocity field in our method using flow matching on a linear interpolant for a specified number of training steps in both tasks. All hyperparameters of our method, except the velocity field and classifier architectures, are provided in Table~\ref{tbl:hyperparams_ours}. The hyperparameters of the baseline methods for the handbag-to-shoe and male-to-female translation tasks are listed in Table~\ref{tbl:hyperparams_baselines_shoebags} and Table~\ref{tbl:hyperparams_baselines_celeba}, respectively. All baselines are trained for 400K steps with a batch size of 512, ensuring comparability with the effective number of samples used in the particle optimization of our method.

\begin{table}[t]
\caption{FID between reconstructed images from the training set and
ground-truth images from the held-out sets on the images of Handbags, shoes, CelebA male, and CelebA female.}
\vspace{-10pt}
\begin{center}
\small
\begin{tabular}{cccc}
\toprule
Handbag & Shoes & Male &  Female \\
\midrule
 5.07  & 6.09 & 7.09 & 5.06 \\
\bottomrule
\end{tabular}
\end{center}
\label{tbl:fid_vae}
\end{table}

\begin{table}[t]
\caption{Architecture details of the classifier.}
\begin{center}
\begin{tabular}{llp{0.60\textwidth}}
\toprule
 & \textbf{Layer} & \textbf{Component} \\
\midrule
\multirow{15}{*}{\textbf{Encoding}}
  & Conv 1 & $3\times3$ \texttt{Conv2d}, channels=256, kernel size=3, stride=1, bias=True, padding=1 \\
  & Group Normalization 1 & num groups=4, channels=256 \\
  
  & Conv 2 & $3\times3$ \texttt{Conv2d}, channels=512, kernel size=3, stride=1, bias=True, padding=1 \\
  & Group Normalization 2 & num groups=32, channels=512 \\
  
  & Conv 3 & $3\times3$ \texttt{Conv2d}, channels=512, kernel size=3, stride=2, bias=True, padding=1 \\
  & Group Normalization 3 & num groups=32, channels=512 \\
  
  & Conv 4 & $3\times3$ \texttt{Conv2d}, channels=1024, kernel size=3, stride=1, bias=True, padding=1 \\
  & Group Normalization 4 & num groups=32, channels=1024 \\
  
  & Conv 5 & $3\times3$ \texttt{Conv2d}, channels=1024, kernel size=3, stride=1, bias=True, padding=1 \\
  & Group Normalization 5 & num groups=32, channels=1024 \\

\midrule

\multirow{14}{*}{\textbf{Decoding}}
  & Conv Transpose 5 & $3\times3$ \texttt{ConvTranspose2d}, channels=1024, kernel size=3, stride=1, bias=True, padding=1 \\
  & Group Normalization 5 & num groups=32, channels=1024 \\
  
  & Conv Transpose 4 & $3\times3$ \texttt{ConvTranspose2d}, channels=1024, kernel size=3, stride=1, bias=True, padding=1 \\
  & Group Normalization 4 & num groups=32, channels=1024 \\
  
  & Conv Transpose 3 & $4\times4$ \texttt{ConvTranspose2d}, channels=512, kernel size=4, stride=2, bias=True, padding=1 \\
  & Group Normalization 3 & num groups=32, channels=256 \\

  & Conv Transpose 2 & $3\times3$ \texttt{ConvTranspose2d}, channels=512, kernel size=3, stride=1, bias=True, padding=1 \\
  & Group Normalization 2 & num groups=32, channels=512 \\
  
  & Conv Transpose 1 & $3\times3$ \texttt{ConvTranspose2d}, channels=256, kernel size=3, stride=1, bias=True, padding=1 \\
\bottomrule
\end{tabular}
\end{center}
\label{tbl:architecture_classifier}
\end{table}

\begin{table}[h]
\caption{Architecture details of the velocity field.}
\begin{center}
\renewcommand{\arraystretch}{1.15}
\begin{tabular}{llp{0.60\textwidth}}
\toprule
 & \textbf{Layer} & \textbf{Component} \\
\midrule
\multirow{5}{*}{\textbf{Encoding}}
  & Conv 1 & $3\times3$ \texttt{Conv2d}, channels=64, stride 1, bias=True, padding=1 \\
  & Conv 2 & $3\times3$ \texttt{Conv2d}, channels=256, stride 1, bias=True, padding=1 \\
  & Conv 3 & $3\times3$ \texttt{Conv2d}, channels=512, stride 2, bias=True, padding=1 \\
  & Conv 4 & $3\times3$ \texttt{Conv2d}, channels=512, stride 1, bias=True, padding=1 \\
  & Conv 5 & $3\times3$ \texttt{Conv2d}, channels=1024, stride 1, bias=True, padding=1 \\
\midrule
\multirow{10}{*}{\textbf{Decoding}}
  & Conv Transpose 5 & $3\times3$ \texttt{ConvTranspose2d}, channels=1024, stride 1, bias=True, padding=1 \\
  
  & Conv Transpose 4 & $3\times3$ \texttt{ConvTranspose2d}, channels=512, stride 1, bias=True, padding=1 \\
  
  & Conv Transpose 3 & $4\times4$ \texttt{ConvTranspose2d}, channels=512, stride 2, bias=True, padding=1 \\
  
  & Conv Transpose 2 & $3\times3$ \texttt{ConvTranspose2d}, channels=256, stride 1, bias=True, padding=1 \\
  
  & Conv Transpose 1 & $3\times3$ \texttt{ConvTranspose2d}, channels=64, stride 1, bias=True, padding=1 \\
\bottomrule
\end{tabular}
\end{center}
\label{tbl:architecture_velocity_field}
\end{table}

\begin{table}[h]
\caption{Hyperparameters of our method in image-to-image translation experiments.}
\begin{center}

\begin{tabular}{lcc}
\toprule
\multirow{2}{*}{\textbf{Hyperparameter}} & \multicolumn{2}{c}{\textbf{Value}} \\
\cmidrule(lr){2-3} & \textbf{Handbag to shoe} & \textbf{CelebA male to female} \\
\midrule
Total outer loop & 20 & 50 \\
Classifier training batch size & 256  & 256  \\
Initial classifier training steps & 2000 & 2000  \\
Classifier learning rate & 0.001 & 0.001  \\
Classifier intermediate training frequency & 10  & 10  \\
Classifier intermediate training steps & 10  & 10 \\
Velocity field training batch size & 256 & 256  \\
Velocity field initialization training steps & 10000 & 10000 \\
Velocity field training steps & 1000 & 300 \\
Velocity field learning rate & 0.001  & 0.001  \\
Particle optimization batch size & 512  & 1024 \\
Particle optimization steps & 1000 & 300 \\
Particle optimization learning rate & 0.001 & 0.001 \\
Kinetic loss weight & 0.05 & 0.05 \\
Number of timesteps & 15 & 15 \\
\bottomrule
\end{tabular}

\end{center}
\label{tbl:hyperparams_ours}
\end{table}

\begin{table}[h]
\caption{Hyperparameters of baselines on the handbags to shoes image translation experiment.}
\begin{center}
\begin{tabular}{lcccc}
\toprule
\textbf{Hyperparameter} & \textbf{OT-CFM} & \textbf{Stochastic Interpolants} & \textbf{Rectified Flow} & \textbf{SB-CFM} \\
\midrule
Training steps & 400000 & 400000 & 400000 & 400000 \\
Learning rate & 0.001 & 0.001 & 0.001 & 0.001  \\
Batch size & 512 & 512 & 512 & 512  \\
Number of timesteps & 15 & 15 & 15 & 15 \\
\bottomrule
\end{tabular}
\end{center}
\label{tbl:hyperparams_baselines_shoebags}
\end{table}

\begin{table}[h]
\caption{Hyperparameters of baselines on the CelebA male to female image translation experiment.}
\begin{center}
\begin{tabular}{lcccc}
\toprule
\textbf{Hyperparameter} & \textbf{OT-CFM} & \textbf{Stochastic Interpolants} & \textbf{Rectified Flow} & \textbf{SB-CFM} \\
\midrule
Training steps & 400000 & 400000 & 400000 & 400000 \\
Learning rate & 0.001 & 0.001 & 0.001 & 0.001  \\
Batch size & 512 & 512 & 512 & 512  \\
Number of timesteps & 15 & 15 & 15 & 15 \\
\bottomrule
\end{tabular}
\end{center}
\label{tbl:hyperparams_baselines_celeba}
\end{table}

\end{document}